\newtheorem{example}{Example}%
\newtheorem{remark}{Remark}%
\newtheorem{definition}{Definition}%
\newtheorem{theorem}{Theorem}
\newtheorem{lemma}{Lemma}
\newcommand{\vectornorm}[1]{\|#1\|}
\def\calX{{\cal X}}
\newif\ifCOM
\title{Computing a partition function of a generalized pattern-based energy over a semiring}
\author{
  Rustem Takhanov\\
  Mathematics Department, School of Sciences and Humanities \\
  Nazarbayev University \\
  Astana\\
  \texttt{rustem.takhanov@nu.edu.kz} \\
}
\begin{document}
\maketitle

\begin{abstract}
Valued constraint satisfaction problems with ordered variables (VCSPO) are a special case of Valued CSPs in which variables are totally ordered and soft constraints are imposed on tuples of variables that do not violate the order. 
We study a restriction of VCSPO, in which soft constraints are imposed on a segment of adjacent variables and a constraint language $\Gamma$ consists of $\{0,1\}$-valued characteristic functions of predicates. 
This kind of potentials generalizes the so-called pattern-based potentials, which were applied in many tasks of structured prediction.

For a constraint language $\Gamma$ we introduce a closure operator, $ \overline{\Gamma^{\cap}}\supseteq \Gamma$, and give examples of constraint languages for which $|\overline{\Gamma^{\cap}}|$ is small. If all predicates in $\Gamma$ are cartesian products, we show that the minimization of a generalized pattern-based potential (or, the computation of its partition function) can be made in ${\mathcal O}(|V|\cdot |D|^2 \cdot |\overline{\Gamma^{\cap}}|^2 )$ time, where $V$ is a set of variables, $D$ is a domain set. If, additionally, only non-positive weights of constraints are allowed, the complexity of the minimization task drops to ${\mathcal O}(|V|\cdot  |\overline{\Gamma^{\cap}}| \cdot |D| \cdot \max_{\varrho\in \Gamma}\vectornorm{\varrho}^2 )$ where $\vectornorm{\varrho}$ is the arity of $\varrho\in \Gamma$.
For a general language $\Gamma$ and non-positive weights, the minimization task can be carried out in ${\mathcal O}(|V|\cdot |\overline{\Gamma^{\cap}}|^2)$ time.

We argue that in many natural cases $\overline{\Gamma^{\cap}}$ is of moderate size, though in the worst case $|\overline{\Gamma^{\cap}}|$ can blow up and depend exponentially on $\max_{\varrho\in \Gamma}\vectornorm{\varrho}$.
\end{abstract}

\keywords{pattern-based potential\and valued constraint satisfaction\and conditional random fields\and maximum a posteriori (MAP)}

\section{Introduction}
The constraint satisfaction problem (CSP)~\cite{Khanna} is a general computational problem in which an instance consists of a finite set of variables, $V$, a finite domain, $D$, and a set of constraints $C = \{c_1, \cdots, c_m\}$ where $c_i = \langle (v^i_1,\cdots, v^i_{n_i}), \varrho_i\rangle$, $v^i_j\in V$, $\varrho_i\subseteq D^{n_i}$. The goal of CSP is to assign every variable in $V$ to a value from $D$ in such a way that all the constraints are satisfied, or, in other words, an assignment of the tuple $(v^i_1,\cdots, v^i_{n_i})$ should be in $\varrho_i$.
The valued constraint satisfaction problem (VCSP)~\cite{COOPER2003311,Schiex} is a generalization of CSP where every constraint $c_i$ is a pair $\langle (v^i_1,\cdots, v^i_{n_i}), f_i\rangle$, $f_i:  D^{n_i}\rightarrow {\mathbb Z}$ and ${\mathbb Z}$ is a set of integers. The goal of VCSP is to find an assignment $h: V\rightarrow D$ that minimizes the energy function:
$$
E[h] = \sum_{i=1}^m f_i(h(v^i_1),\cdots, h(v^i_{n_i})).
$$
(V)CSP is a general framework that captures a large set of computational problems arising in propositional logic, combinatorial optimization, graph theory, artificial intelligence, scheduling, biology (protein folding), computer vision, etc. Since the satisfiability problem is a special case of them, CSP and VCSP are NP-hard in general.

From the 70s an active and fruitful research direction in theoretical computer science is to characterize restrictions on an instance that imply the tractability of the CSP. A natural restriction is to assume that the predicates (i.e. $\varrho_i$) in CSP, or the cost functions (i.e. $f_i$) in VCSP, are  from a prespecified set of predicates (functions), denoted by $\Gamma$, which is called the constraint language. The restriction defines the so-called fixed-template (V)CSP, denoted by ${\rm (V)CSP}(\Gamma)$, whose complexity was classified for all possible constraint languages~\cite{HELL199092,JEAVONS1998185,BJK,BulatovFOCS,Feder,ZhukDmitriy}. The next natural idea is to restrict the structure of constraints, by which we mean various graphs that can be associated with an instance, such as the primal graph, the dual graph, the incidence graph,  the constraint hypergraph, the constraint relational structure, the microstructure graph etc~\cite{DechterPearl,GOTTLOB2002579,GroheMarx,NewHybrid}. 

Using algebraic techniques based on so-called lifted languages~\cite{HybridVCSPs,takhanov2017searching}, it was shown that the complexity of ${\rm VCSP}(\Gamma)$ does not change even if we add the restriction that the set $V$ is ordered, i.e. $V\subseteq {\mathbb Z}$, and for every constraint $c_i = \langle (v^i_1, \cdots, v^i_{n_i}), f_i \rangle$ we have $v^i_1 < \cdots < v^i_{n_i}$~\cite{Effectiveness}. This problem, denoted ${\rm (V)CSPO}(\Gamma)$, is called fixed-template (V)CSP with ordered variables. Thus, the computational complexities of ${\rm (V)CSP}(\Gamma)$ and ${\rm (V)CSPO}(\Gamma)$
are the same.


A direct restriction of ${\rm (V)CSPO}(\Gamma)$ is the following problem, denoted ${\rm (V)CSPO}^\ast (\Gamma)$: let us assume $V\subseteq {\mathbb Z}$ and for every constraint $c_i = \langle (v^i_1, \cdots,  v^i_{n_i}), f_i\rangle$ we have $v^i_1+1=v^i_2, \cdots, v^i_{n_i-1}+1 = v^i_{n_i}$, i.e. the constraints can be applied only to a segment of adjacent variables. In the paper we will study the complexity of this problem for certain constraint languages. 

The specific case of ${\rm VCSPO}^\ast (\Gamma)$ is the minimization of a pattern-based potential.
Initially, the pattern-based potential was defined as the following energy function over $x\in D^n$:
\begin{equation}
E(x)=\sum_{\alpha\in\Pi}\sum_{\substack{[i,j]\subseteq[1,n]\\j-i+1=|\alpha|}}c^\alpha_{ij}\cdot[x_{i:j}=\alpha]
\label{eq:patternCRF}
\end{equation}
where $\Pi\subseteq  D^\ast = \bigcup_{i=0}^\infty D^i$ is a fixed set of non-empty words, $|\alpha|$ is the length of word $\alpha$, $x_{i:j} = x_i x_{i+1}\cdots x_{j}$, $c^\alpha_{ij}\in {\mathbb Z}$ and $[\cdot]$ is the {\em Iverson bracket}, i.e. $[{\bf True}]=1$ and $[{\bf False}]=0$. It can be shown that the minimization of a pattern-based potential is equivalent to ${\rm VCSPO}^\ast (\Gamma)$ where $\Gamma = \{f_{\alpha}| \alpha\in \Pi\}$ and $f_{\alpha}: D^{|\alpha|}\rightarrow {\mathbb Z}$, $f_{\alpha}(x) = [x=\alpha]$. Intuitively, such an energy potential models long-range interactions for selected subsequences of labels~\cite{Ye:NIPS09, TakhanovKolm}. Pattern-based potentials have been applied to handwritten character recognition, identification of named entities from text~\cite{Ye:NIPS09}, optical character recognition~\cite{Qian:ICML09} and  language modeling~\cite{PatternsVersus, takhanov2014combining}.
In~\cite{Ye:NIPS09, TakhanovKolm} it was shown that the potential~\eqref{eq:patternCRF} can be minimized in time ${\mathcal O}(n |\Pi^\ast| \min\{|D|, \log(\max_{\alpha\in \Pi}|\alpha|+1)\})$ where $\Pi^\ast = \{\alpha| \alpha\beta\in \Pi\}$ is the prefix closure of the set $\Pi$.

A direct generalization of the pattern-based potential is the following energy function over $x\in D^n$:
\begin{equation}
E(x)=\sum_{\varrho\in\Gamma}\sum_{\substack{[i,j]\subseteq[1,n]\\j-i+1=\vectornorm{\varrho}}}f^\varrho_{ij}\cdot[x_{i:j}\in \varrho]
\label{eq:patternCRFgen}
\end{equation}
where $\Gamma\subseteq  \bigcup_{i=0}^\infty 2^{D^i}$ is a fixed set of predicates over $D$, $\vectornorm{\varrho}$ is the arity of the predicate $\varrho$ and $f^\varrho_{ij}\in {\mathbb Z}$. Analogous to the previous potential, the minimization of the generalized pattern-based potential~\eqref{eq:patternCRFgen} is equivalent to ${\rm VCSPO}^\ast (\Gamma')$ where $\Gamma' = \{f_{\varrho}| \varrho\in \Gamma\}$ and $f_{\varrho}: D^{\vectornorm{\varrho}}\rightarrow {\mathbb Z}$, $f_{\varrho}(x) = [x\in \varrho]$.

The minimization of~\eqref{eq:patternCRFgen} can be easily reduced to the minimization of~\eqref{eq:patternCRF} by simply plugging in $\sum_{\alpha\in \varrho} [x_{i:j}=\alpha]$ instead of $[x_{i:j}\in \varrho]$ and representing~\eqref{eq:patternCRFgen} as the pattern-based potential. This strategy leads to an algorithm that minimizes the generalized pattern-based potential~\eqref{eq:patternCRFgen}
in time ${\mathcal O}(n |\Gamma^\triangleright| \cdot \min\{|D|, \log(l_{\max}+1)\})$ where $\Gamma^\triangleright = \{(x_1, \cdots, x_i)| (x_1, \cdots, x_{\|\varrho\|})\in \varrho\in \Gamma, 1\leq i\leq \|\varrho\|\}$ is a prefix closure of all tuples in $\bigcup_{\varrho\in \Gamma}\varrho$ and $l_{\max}=\max_{\varrho\in \Gamma}\vectornorm{\varrho}$.

In practice the value of $|\Gamma^\triangleright|$ is the main factor that limits the applicability of this reduction, because $|\varrho|$, for $\varrho\in \Gamma$, in natural applications can be very large~\cite{Qian:ICML09}. E.g., if any $\varrho\in \Gamma$ can be represented as $\varrho = \Omega_1\times \cdots \times\Omega_{\vectornorm{\varrho}}$ (such languages $\Gamma$ are called {\em simple}), then $|\varrho| = |\Omega_1|\times \cdots \times |\Omega_{\vectornorm{\varrho}}|$ exponentially depends on $\vectornorm{\varrho}$. Thus, designing an algorithm for the minimization of~\eqref{eq:patternCRFgen} and avoiding an exponential dependance on $l_{\max}$ becomes an actual problem. Note that, for the general case, such a dependance is unavoidable. Indeed, suppose that $D=\{0,1\}$ and there is an algorithm with the complexity $\mathcal{O}({\rm poly}(n, f(l_{\max})))$ for the minimization of~\eqref{eq:patternCRFgen},  where $f$ is a subexponential function. Since $l_{\max}\leq n$, this would imply that Weighted Max SAT is solvable in a subexponential time, which contradicts to the exponential time hypothesis~\cite{Impagliazzo}.

Let us briefly outline our results.
Given the language $\Gamma$, first we describe a closure operator $\overline{\Gamma^{\cap}}$ that adds new predicates to $\Gamma$, with the inclusion property $\overline{\Gamma^{\cap}}\supseteq \Gamma$. It turns out that the complexity of the minimization of~\eqref{eq:patternCRFgen} depends on the cardinality of $\overline{\Gamma^{\cap}}$, rather than on the cardinality of $\Gamma^\triangleright$. Moreover, as the examples in Section~\ref{Examples} demonstrate, for some natural languages $\Gamma$ we have $\overline{\Gamma^{\cap}} = \Gamma$, and all the more $|\overline{\Gamma^{\cap}}| = |\Gamma| \ll |\Gamma^\triangleright|$.
We present algorithms for the following problems:
\begin{itemize}
\item The minimization of~\eqref{eq:patternCRFgen} for non-positive $f^\varrho_{ij}$: a) in time ${\mathcal O}(n |\overline{\Gamma^{\cap}}|^2)$ for a general language $\Gamma$, b) in time ${\mathcal O} \big(n |\overline{\Gamma^\cap}|\cdot |D|  \cdot l_{\max}^2\big) $ for a simple language $\Gamma$;
\item The minimization of~\eqref{eq:patternCRFgen} for arbitrary $f^\varrho_{ij}$ in time ${\mathcal O}(n |D|^2 \cdot |\overline{\Gamma^{\cap}}|^2)$ for a simple language $\Gamma$;
\item The computation of the partition function $\sum_{x\in D^n}e^{-E(x)}$ in time ${\mathcal O}(n |D|^2 \cdot |\overline{\Gamma^{\cap}}|^2)$ for a simple language $\Gamma$;
\end{itemize}

Note that both the first two tasks and the last one can be uniformly treated as the computation of a partition function over a general semiring~\cite{Bistarelli1999}. Our general algorithm is described in this framework in Section~\ref{generalAlg}.
\section{Related work}
Computational aspects of generalized pattern-based potentials were first formulated and addressed in machine learning, especially in Conditional Random Fields (CRF) research communities. A direct generalization of the linear chain CRFs based on 2-grams~\cite{Lafferty}, with a purpose to include interactions between non-adjacent labels, leads to CRFs with higher-order terms. A key problem with a general higher-order CRF is the NP-hardness of the main three tasks~\cite{Istrail}: the inference, the maximum a posteriori probability (MAP) estimation and the computation of a partition function.
Specific methods to handle the hardness of long-range interactions include works on semi-Markov CRFs~\cite{Sarawagi}, a reduction of the MAP for the sparse high-order CRFs to quadratic functions~\cite{Rother}, the direct dynamic programming approach to sparse high-order CRFs~\cite{QianXian}, the dual decomposition scheme for high-order MRFs~\cite{Komodakis}. Pattern-based potentials were studied in~\cite{Komodakis} and it was shown that a simple message passing algorithm calculates the MAP in linear time when all high-order terms of such a potential are non-positive. In~\cite{cuong14a} it was shown that all three tasks for pattern-based potentials can be solved by linear algorithms.  Refined versions of these algorithms and an efficient sampling technique were described in~\cite{TakhanovKolm}. Recently, new techniques to training pattern-based CRFs were suggested~\cite{vieira-etal-2016-speed,lavergne-yvon-2017-learning,martins-etal-2011-structured,PatternsVersus} in which a sparse set of patterns is represented in a finite-state automaton. Applications of pattern-based CRFs include signal
reconstruction, image denoising, binary image segmentation and stereo matching~\cite{Komodakis}, named entity recognition~\cite{QianXian},  protein dihedral angles prediction~\cite{zhalgas},  contour detection~\cite{Felzenszwalb} and many others.

\subsection{A connection with a pattern-based potential on a grid}
The mentioned paper of Komodakis and Paragios~\cite{Komodakis} also describes an algorithm that solves the minimization of a pattern-based potential on a grid using an oracle that solves the task~\eqref{eq:patternCRF}. A pattern-based potential on a grid $[m]\times [n]$ is an energy function $E: D^{m\times n}\to {\mathbb R}$ that is defined by
\begin{equation*}
\begin{split}
E([x_{ij}]_{i\in [m], j\in [n]}) = \sum_{i\in [m],j\in [n]}u_{ij}(x_{ij})+\\
\sum_{a\in [m],b\in [n]: a+k-1\leq m, b+l-1\leq n}\psi_{ab}(x_{a:a+k-1, b:b+l-1}),
\end{split}
\end{equation*}
where $x_{ij}\in D$, $x_{a:a+k-1, b:b+l-1} = [x_{ij}]_{a\leq i\leq a+k-1,b\leq j\leq b+l-1}$ and $k,l\in {\mathbb N}$ are fixed small numbers (e.g. $k=l=3$). Unary terms $u_{ij}: D\to {\mathbb R}$ are arbitrary, but the higher-order terms $\psi_{ab}: D^{k\times l}\to {\mathbb R}$ are assumed to satisfy the assumption that $\{x\in D^{k\times l}\mid \psi_{ab}(x)\ne 0\}$ is of moderate size (the so-called sparsity assumption).
The dual decomposition scheme of Komodakis and Paragios reduces the minimization of the latter energy (then, this task is called a master task) to so-called horizontal slave tasks of the following kind
\begin{equation*}
\begin{split}
E_s([x_{ij}]_{i\in [k], j\in [n]}) = \sum_{i\in [k], j\in [n]}v_{ij}(x_{ij})+
\sum_{b\in [n]: b+l-1\leq n}\psi_{b}(x_{1:k, b:b+l-1}),
\end{split}
\end{equation*}
where $v_{ij}$ are arbitrary unary functions and $\psi_{b}$ are higher order terms (vertical slave tasks have analogous structure, see Figure~\ref{master}). Then, after defining $D' = D^k$ and treating the matrix $[x_{ij}]_{i\in [k], j\in [n]}$ as a string of column vectors  ${\mathbf x}_1, \cdots, {\mathbf x}_n$, ${\mathbf x}_i\in D'$, one obtains an equivalent minimization task on a chain of variables
\begin{equation*}
\begin{split}
E_s({\mathbf x}_1, \cdots, {\mathbf x}_n) = \sum_{i\in [n]}v_{i}({\mathbf x}_i)+
\sum_{b\in [n]: b+l-1\leq n}\psi_{b}({\mathbf x}_b, \cdots, {\mathbf x}_{b+l-1}),
\end{split}
\end{equation*}
where $\{({\mathbf x}_1, \cdots, {\mathbf x}_l)\in (D')^{l}\mid \psi_{b}({\mathbf x}_1, \cdots, {\mathbf x}_l)\ne 0\}$ is still sparse. The dual decomposition approach allows to handle the case where higher-order terms can be applied to rectangles $x_{a:a+k-1, b:b+l-1}$ with fixed sizes $k,l\in {\mathbb N}$. Though, if a master task includes higher-order terms with different sizes of $k$ and $l$, or non-rectangular higher-order terms, like e.g. $\psi(x_{11},x_{22}, x_{33})$, then higher-order terms of slave tasks will violate the sparsity assumption. Indeed, let $k=l$ and $\psi(x_{11},x_{22}, \cdots, x_{kk}) = [(x_{11}, \cdots, x_{kk}) = (1, \cdots, 1)]$ be a higher-order term in a master task with variables $[x_{ij}]_{i\in [m], j\in [n]}$. Let ${\mathbf x}_j = [x_{1j}, \cdots, x_{kj}]^T$, $j\in [n]$. Then the relation $\varrho = \{({\mathbf x}_1, \cdots, {\mathbf x}_k)\mid (x_{11}, \cdots, x_{kk}) = (1, \cdots, 1)\}$ does not satisfy the sparsity assumption due to $|\varrho| = |D|^{k^2-k}$ (if $k>4$). But,
from the definition of a slave task on variables $[k]\times [n]$ it follows that it contains the higher-order term $\psi'({\mathbf x}_1, \cdots, {\mathbf x}_k) = [({\mathbf x}_1, \cdots, {\mathbf x}_k)\in \varrho]$.

Thus, to generalize the dual decomposition scheme to a pattern-based potential on a grid with arbitrary (possibly non-rectangular) localized cliques, one have to deal with more general pattern-based potentials on a chain.
One of motivations to study generalized pattern-based potentials is to answer the latter problem.
\begin{figure}

  \begin{center}
\includegraphics[width=10cm]{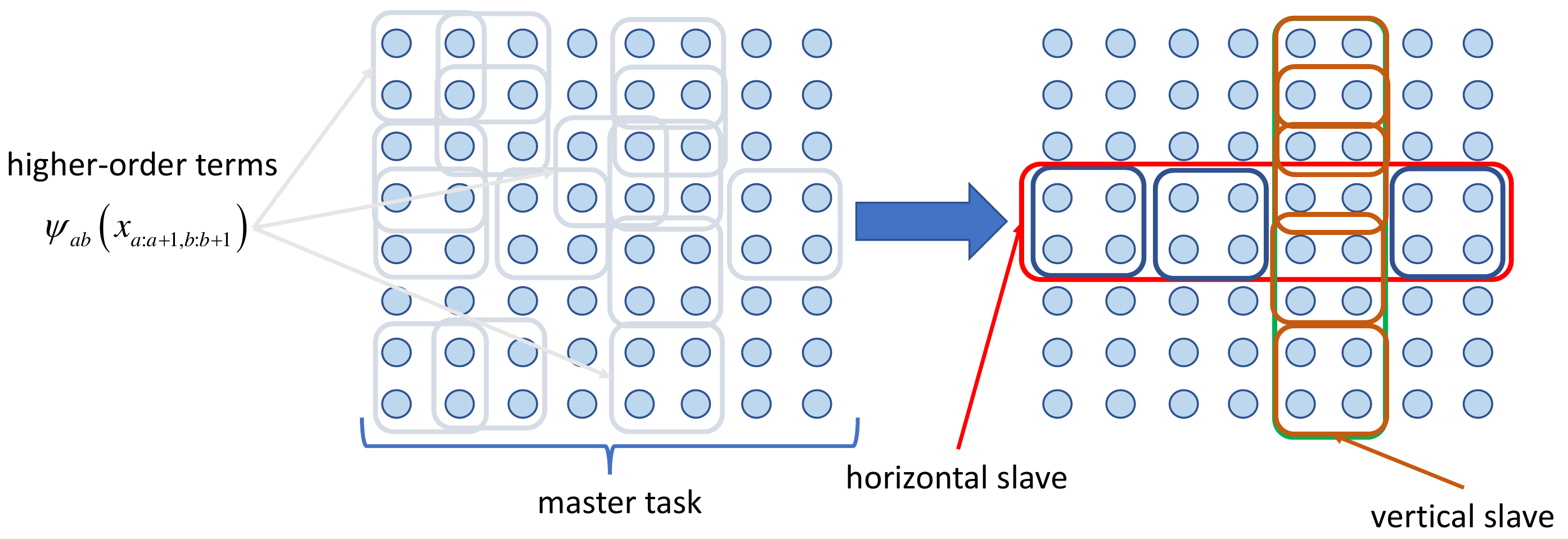}
  \end{center}
\caption{A pattern-based potential on a grid for $k=l=2$. Slave tasks are formed by a set of variables $\{i,i+1\}\times [n]$ (horizontal slave) or $[m]\times \{j,j+1\}$ (vertical slave). A set of soft constraints imposed on a set of variables of a slave task are added to the task (with unary constraints being divided by the number of slaves they participate in).}\label{master}
\end{figure}

\subsection{Generalized pattern-based potentials as bounded treewidth VCSPs}
The minimization of a generalized pattern-based potential~\eqref{eq:patternCRFgen} is a special case of Valued CSP. A dual graph $\mathcal{G} = (V,E)$ of this VCSP instance is a graph with the set of vertices $V=[n]$ and a set of edges $\{i,j\}, i<j$ such that variables $x_i$ and $x_j$ participate together in a scope of some valued constraint (i.e. $j-i+1\leq \|\varrho\|$ for some $\varrho\in \Gamma$). By the definition of the potential~\eqref{eq:patternCRFgen}, the bandwidth of the graph $\mathcal{G} $ does not exceed $l_{\max}-1$ where $l_{\max}=\max_{\varrho\in \Gamma}\|\varrho\|$. Since the treewidth is bounded by the bandwidth, we conclude that the treewidth of the dual graph $\mathcal{G}$ is bounded by $l_{\max}-1$. VCSPs whose dual constraint graphs have a bounded treewidth were studied in a line of papers~\cite{Terrioux,Givry,Ndiaye,Scarcello} and in the PhD Thesis~\cite{Farnqvist}. Backtracking techniques applied to~\eqref{eq:patternCRFgen} achieve the time complexity $\mathcal{O}(n^2 |\Gamma|\cdot |D|^{l_{\max}}l^2_{\max} \log |D|)$~\cite{Terrioux}, while reducing the problem to Valued CSP with an acyclic hypergraph requires $\mathcal{O}(n |D|^{l_{\max}}\log |D|^{l_{\max}})$ time~\cite{Scarcello}. In all these algorithms, the factor $|D|^{l_{\max}}$ in time complexity is a crucial obstacle for practical applications. This factor appears due to the fact that generic dynamic programming algorithms do not take into account the specificity of the language $\Gamma$. To our knowledge the only algorithm for~\eqref{eq:patternCRFgen} that explicitly depends on $\Gamma$ is the reduction to the pattern-based potential minimization that we described in the introduction. Recall that its time complexity is  ${\mathcal O}(n |\Gamma^\triangleright| \cdot \min\{|D|, \log(l_{\max}+1)\})$ where $\Gamma^\triangleright$ is a prefix closure of all tuples in $\bigcup_{\varrho\in \Gamma}\varrho$. Therefore, we consider the latter time complexity as a baseline theoretical complexity for~\eqref{eq:patternCRFgen} and we will compare new algorithms with this baseline. All time compexities of mentioned algorithms are collected in Table~\ref{compl}.
\begin{table}
\begin{center}
\begin{tabular}{ |c|c| }
 \hline
 Algorithm & Time complexity \\ 
 \hline
BTD$_{val}$, backtracking~\cite{Terrioux} & $\mathcal{O}(n^2 |\Gamma|\cdot |D|^{l_{\max}}l^2_{\max} \log |D|)$ \\
\hline 
tree decomposition~\cite{Scarcello} & $\mathcal{O}(n |D|^{l_{\max}}\log |D|^{l_{\max}})$ \\
\hline
pattern-based~\cite{TakhanovKolm} & ${\mathcal O}(n |\Gamma^\triangleright| \cdot \min\{|D|, \log(l_{\max}+1)\})$ \\
\hline
Algorithm~\ref{alg:DP2} (simple $\Gamma$) & ${\mathcal O}(n |D|^2 \cdot |\overline{\Gamma^{\cap}}|^2)$ \\
\hline
Algorithm~\ref{alg:DP2} (gen. $\Gamma$) & ${\mathcal O} (n |\Gamma^\ast|^2)$ \\
\hline
Algorithm~\ref{alg:DP'} (simple $\Gamma$, $f^{\varrho}_{ij}\leq 0$) & ${\mathcal O} \big(n |\overline{\Gamma^\cap}|\cdot |D|  l^2_{\max}\big) $ \\
\hline
Algorithm~\ref{alg:DP'} (gen. $\Gamma$, $f^{\varrho}_{ij}\leq 0$) & ${\mathcal O}(n |\overline{\Gamma^{\cap}}|^2)$  \\
\hline
\end{tabular}
\end{center}
\caption{\centering Time complexities of algorithms.} \label{compl}
\end{table}
\section{Generalized pattern-based potentials}

Let us first introduce some notations.
For a predicate $\varrho\subseteq D^k$
we denote $\vectornorm{\varrho}=k$, and $\varrho_{i:j}=\left\{(x_i,...,x_j)|(x_1,...,x_k)\in \varrho\right\}$. Sometimes we use ${\rm proj}_i \varrho$ instead of $\varrho_{i:i}$. By a string $x$ over $D$ we mean any element of $\bigcup_{i=1}^\infty D^i$ or an empty string $\lambda$. A string $x=(x_1,\cdots, x_n)\in D^n$ is simply denoted by $x_1\cdots x_n$. If $x=(x_1,\cdots, x_n)$ then $x_{i:j}=x_i\ldots x_j$ denotes a substring of $x$ from $i$ to $j$. By $[l]$ we denote the set $\{1,\cdots, l\}$.

Suppose that $D$ is finite and we are given a set $\Gamma$ of predicates over $D$, i.e. each predicate $\varrho\in\Gamma$ is a subset of $D^{\vectornorm{\varrho}}$.

\begin{mdframed}
By $PPM\left(\Gamma\right)$ (pattern-based potential minimization) we denote a minimization task with:

{\bf Instance}: a natural number $n$ and rational values $\left\{c^\varrho_{ij}\right\}_{(\varrho,i,j)\in\Lambda}$ where $\Lambda$ is the set of triplets $(\varrho,i,j)$ with $\varrho\in\Gamma$ and $j-i=\vectornorm{\varrho}-1$, $[i,j]\subseteq [1,n]$.

{\bf Solution}: A word $w$ over $D$ of length $n$ that minimizes the objective.

{\bf Objective}: A function given by the formula:
\begin{equation}
F\left( x \right) = \sum\limits_{\left( {\varrho ,i,j} \right) \in\Lambda} {f_{ij}^\varrho \left( {{x_{i:j}}} \right)}
\qquad\mbox{where}\qquad
f_{ij}^\varrho \left( y \right) =
\begin{cases}
c_{ij}^\varrho & \mbox{if }y\in\varrho \\
0 & \mbox{otherwise}
\end{cases}
\label{eq:f}
\end{equation}
over words $x$ of length $n$.
\end{mdframed}

When we require input weights $c^\varrho_{ij}$ to be non-positive, we will use the notation $PPM^{-}\left(\Gamma\right)$.

Let us also introduce the following generalization of the previous problem.
Let $\mathcal{R} = (R,\oplus,\otimes)$ be a commutative semiring with elements $\mathbb{O},\mathds{1}\in R$
which are identities for $\oplus$ and $\otimes$ respectively.

\begin{mdframed}
By $PPSS\left(\Gamma, \mathcal{R}\right)$ (pattern-based potential summation over semiring) we denote a summation task, where

{\bf Instance}: a natural number $n$ and values $\left\{c^\varrho_{ij}\right\}_{(\varrho,i,j)\in\Lambda}\subseteq R$.

 Define the cost of labeling $x\in D^n$ via
\begin{equation}
\!\!\!\!\!\!F( x ) =\!\!\!\!\! \bigotimes_{\left( {\varrho ,i,j} \right) \in\Lambda} \!\!\! {f_{ij}^\varrho \left( {{x_{i:j}}} \right)}
,\;\;
f_{ij}^\varrho ( y ) =
\begin{cases}
c_{ij}^\varrho & \mbox{if }y\in\varrho \\
\mathds{1} & \mbox{otherwise}
\end{cases}\!\!\!\!\!\!
\label{eq:F}
\end{equation}

{\bf Solution}: Our goal is to compute\!\!\!\!\!\!
\begin{equation}
Z=\bigoplus_{x\in D^n} F(x)\label{eq:Fsum}
\end{equation}
\end{mdframed}

\begin{remark}
Note that when $\mathcal{R}=(\mathbb Q,\min, +)$, $PPSS\left(\Gamma, \mathcal{R}\right)$ is equivalent to $PPM\left(\Gamma\right)$. When $\mathcal{R}=(\mathbb Q, +, \times)$, and $f_{ij}^\varrho(x_{i:j}) = e^{-c^\varrho_{ij}[x_{i:j}\in \varrho]}$, then $PPSS\left(\Gamma, \mathcal{R}\right)$ is equivalent to the computation of
$$
\sum_{x\in D^n} e^{-E(x)}.
$$ 
where $E(x) = \sum_{\left( {\varrho ,i,j} \right) \in\Lambda} c^\varrho_{ij}[x_{i:j}\in \varrho]$ is a generalized pattern-based energy.
We will be interested in the computational complexity of both $PPM\left(\Gamma\right)$ and $PPSS\left(\Gamma, \mathcal{R}\right)$.
\end{remark}

Note that if $\alpha = D^{k}\times \alpha'$ and $\alpha,\alpha'\in\Gamma$ then
we can eliminate $\alpha$ from $\Gamma$, i.e. delete the longer predicate from $\Gamma$, and update the weight $c_{i+k, j}^{\alpha'}\leftarrow c_{i+k, j}^{\alpha'}\otimes c_{i, j}^{\alpha}$ correspondingly.
By construction, such an elimination does not change the function that is determined by \eqref{eq:f} or \eqref{eq:F}. The same holds in the case in which  $\alpha = \alpha'\times D^{k}$ and $\alpha,\alpha'\in\Gamma$.

\section{Closures of the constraint language $\Gamma$}\label{closures}

For predicates $\alpha,\beta$ over $D$, denote
$$
\alpha | \beta = \begin{cases}
{\left\{(x_1,...,x_{\vectornorm{\alpha}})\in \alpha|(x_{\vectornorm{\alpha}-\vectornorm{\beta}+1},...,x_{\vectornorm{\alpha}})\in \beta\right\},{\rm{\,\,if\,\,}} \vectornorm{\alpha}>\vectornorm{\beta}} \\
  {\left\{(x_1,...,x_{\vectornorm{\beta}})\in \beta|(x_{\vectornorm{\beta}-\vectornorm{\alpha}+1},...,x_{\vectornorm{\beta}})\in \alpha\right\},{\rm{\,\,if\,\,otherwise\,\,}}} \\
\end{cases} 
$$
It is easy to see that $\varrho^1 | (\varrho^2 | \varrho^3) = (\varrho^1 | \varrho^2) | \varrho^3$ and $\varrho^1 | \varrho^2 = \varrho^2 | \varrho^1$, i.e. $|$ is an associative and a commutative operation.
If $\vectornorm{\varrho}>0$, let us denote $\varrho^{-} = \left\{(x_1,...,x_{\vectornorm{\varrho}-1})|(x_1,...,x_{\vectornorm{\varrho}})\in\varrho\right\}$. If $\vectornorm{\varrho}=1$ and $\lambda$ is an empty word, then the predicate $\varrho^{-} = \{\lambda\}$ is denoted by $\varepsilon$.

The definition of $\alpha|\beta$ is motivated by the following observation. For any predicate $\alpha$ over $D$ let us denote $*\alpha$ the set of words $w$ over $D$ such that $w_{|w|-\vectornorm{\alpha}+1:|w|}\in\alpha$.
Then for any two predicates $\alpha$ and $\beta$ we have $(\ast\alpha)\cap(\ast\beta) = \ast(\alpha|\beta)$.

\begin{definition}
Let $\Gamma$ be a finite set of predicates over $D$. 

\begin{enumerate}

\item The intersection of all supersets $\widehat\Gamma \supseteq \Gamma$, such that
\begin{itemize}
\item $\varrho\in \widehat\Gamma\rightarrow \varrho^{-}\in \widehat\Gamma$.
\end{itemize}
is called the prefix closure of $\Gamma$, and is denoted as $\overline{\Gamma}$. If $\Gamma = \overline{\Gamma}$, then  $\Gamma$ is called prefix closed.

\item The intersection of all supersets $\widehat\Gamma \supseteq \Gamma$, such that
\begin{itemize}
\item $\varrho^1\in \widehat\Gamma,\varrho^2\in \Gamma\rightarrow \varrho^1 | \varrho^2 \in \widehat\Gamma$
\end{itemize}
is called the intersection closure of $\Gamma$, and is denoted as $\Gamma^\cap$. If $\Gamma = \Gamma^\cap$, then  $\Gamma$ is called intersection closed.

\item The intersection of all supersets $\widehat\Gamma \supseteq \Gamma$, such that
\begin{itemize}
\item $\varrho\in \widehat\Gamma, a\in D\rightarrow \varrho^{-}\times \{a\}\in \widehat\Gamma$
\end{itemize}
is called the singleton suffix closure of $\Gamma$, and is denoted as $\Gamma^\circ$. If $\Gamma = \Gamma^\circ$, then  $\Gamma$ is called singleton suffix closed.

\item The intersection of all prefix closed and intersection closed supersets $\widehat\Gamma \supseteq \Gamma$ is denoted as $\overline{\Gamma^{\cap}}$.

\item The intersection of all prefix closed, intersection closed and singleton suffix closed supersets $\widehat\Gamma \supseteq \Gamma$ is called the closure of $\Gamma$ and is denoted as $\Gamma^\ast$.
\end{enumerate}

\end{definition}

\begin{definition} If each predicate $\varrho\in\Gamma$
is a direct product of subsets of $D$, i.e. $\varrho = \Omega_1\times \cdots \times \Omega_{\vectornorm{\varrho}}$, then $\Gamma$ is called simple.
\end{definition} 

\begin{theorem}\label{simple}
For a simple language $\Gamma$, we have $\Gamma^\ast = \left(\overline{\Gamma^{\cap}}\right)^\circ$.
\end{theorem}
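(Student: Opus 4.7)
The plan is to prove the two inclusions separately. The easy direction $\Gamma^\ast \supseteq (\overline{\Gamma^\cap})^\circ$ is immediate from minimality: $\Gamma^\ast$ is, by definition, prefix closed, intersection closed and singleton suffix closed and contains $\Gamma$, so it contains $\overline{\Gamma^\cap}$ (the smallest prefix/intersection closed superset of $\Gamma$) and then also the singleton suffix closure of that set. For the reverse inclusion $\Gamma^\ast \subseteq (\overline{\Gamma^\cap})^\circ$ I would show that $(\overline{\Gamma^\cap})^\circ$ is \emph{itself} closed under all three operations and apply minimality of $\Gamma^\ast$ in the other direction. The singleton-suffix closedness is built into the definition; the work is to verify prefix closure and intersection closure.

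The first preparatory step is to note that simplicity is preserved by all three operations. If $\alpha=\Omega^\alpha_1\times\cdots\times\Omega^\alpha_k$ and $\beta=\Omega^\beta_1\times\cdots\times\Omega^\beta_l$ are simple with $k\ge l$, then a direct computation gives
\begin{equation*}
\alpha\mid\beta = \Omega^\alpha_1\times\cdots\times\Omega^\alpha_{k-l}\times(\Omega^\alpha_{k-l+1}\cap\Omega^\beta_1)\times\cdots\times(\Omega^\alpha_k\cap\Omega^\beta_l),
\end{equation*}
and the analogous formulas for $\alpha^-$ and $\alpha^-\times\{a\}$ are simple as well. Hence $\overline{\Gamma^\cap}$ and $(\overline{\Gamma^\cap})^\circ$ both consist of simple predicates when $\Gamma$ is simple. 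The second observation is that iterating the singleton-suffix operation produces nothing new, because $(\varrho^-\times\{b\})^-\times\{a\}=\varrho^-\times\{a\}$ by simplicity; consequently one gets the structural description
\begin{equation*}
(\overline{\Gamma^\cap})^\circ \;=\; \overline{\Gamma^\cap}\;\cup\;\bigl\{\varrho^-\times\{a\}\;:\;\varrho\in\overline{\Gamma^\cap},\; a\in D\bigr\}.
\end{equation*}

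With this in hand the two remaining closure checks are routine. For prefix closure, take $\alpha\in(\overline{\Gamma^\cap})^\circ$. If $\alpha\in\overline{\Gamma^\cap}$ then $\alpha^-\in\overline{\Gamma^\cap}$ already; otherwise $\alpha=\varrho^-\times\{a\}$ for a simple $\varrho\in\overline{\Gamma^\cap}$, and then $\alpha^-=\varrho^-\in\overline{\Gamma^\cap}$. For intersection closure, take $\beta\in\Gamma$ and $\alpha\in(\overline{\Gamma^\cap})^\circ$. The case $\alpha\in\overline{\Gamma^\cap}$ is immediate from intersection closure of $\overline{\Gamma^\cap}$. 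In the remaining case $\alpha=\varrho^-\times\{a\}$, applying the coordinatewise formula for $\mid$ to both $\alpha$ and $\varrho$ shows that $\alpha\mid\beta$ and $(\varrho\mid\beta)^-\times\{a\}$ agree on every coordinate (the last coordinate of each being $\{a\}$ intersected with the relevant $\Omega^\beta$-component), so $\alpha\mid\beta=(\varrho\mid\beta)^-\times\{a\}$, which lies in $(\overline{\Gamma^\cap})^\circ$ since $\varrho\mid\beta\in\overline{\Gamma^\cap}$.

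The step I expect to require the most care is the intersection check, specifically the bookkeeping across the three sub-cases $\|\alpha\|>\|\beta\|$, $\|\alpha\|=\|\beta\|$, $\|\alpha\|<\|\beta\|$, together with the degenerate situation in which $a$ does not lie in the matching component of $\beta$ so that $\alpha\mid\beta$ is empty; one has to verify that the empty predicate of the appropriate arity is also obtainable as $\gamma^-\times\{a\}$ for some $\gamma\in\overline{\Gamma^\cap}$ (or is already present in $\overline{\Gamma^\cap}$). Simplicity of $\Gamma$ is exactly what reduces $\mid$ and $^-$ to coordinatewise operations and lets the last coordinate $\{a\}$ be carried through these manipulations unchanged; without it, the singleton structure of the suffix would interact non-trivially with the other coordinates and the identity would fail.
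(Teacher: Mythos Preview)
Your strategy coincides with the paper's: both reduce Theorem~\ref{simple} to the lemma that for a simple, prefix- and intersection-closed set $S$ the singleton-suffix closure $S^\circ$ is again prefix- and intersection-closed, and then invoke minimality in both directions. Your structural description $(\overline{\Gamma^\cap})^\circ=\overline{\Gamma^\cap}\cup\{\varrho^-\times\{a\}\}$ and the prefix-closure check are exactly what the paper does.

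There is, however, a genuine gap in your intersection-closure check. ``Intersection closed'' for a set $S$ means $S=S^\cap$, i.e.\ $S$ is closed under $\mid$ applied to \emph{any two of its own elements}. You instead write ``take $\beta\in\Gamma$'', which only tests closure under $\mid$ with elements of the original language. Consequently you never treat the case where \emph{both} operands come from the new part, namely $\alpha=\varrho_1^-\times\{a\}$ and $\beta=\varrho_2^-\times\{b\}$ with $\varrho_1,\varrho_2\in\overline{\Gamma^\cap}$. These are precisely cases (a) and (b) in the paper's lemma: for $a=b$ one gets $(\varrho_1^-\mid\varrho_2^-)\times\{a\}$, which (using simplicity to see $(\varrho_1\mid\varrho_2)^-=\varrho_1^-\mid\varrho_2^-$) equals $(\varrho_1\mid\varrho_2)^-\times\{a\}\in(\overline{\Gamma^\cap})^\circ$; for $a\ne b$ the result is empty. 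The computation is short, but it is the crux of the lemma and cannot be skipped. Your identity $\alpha\mid\beta=(\varrho\mid\beta)^-\times\{a\}$ also only holds when $a$ lies in the last component of $\beta$, as you note; in the complementary sub-case the result is again empty.

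On the empty-predicate issue you flag in your last paragraph: you are right that it needs a word, and in fact the paper's own lemma proof also passes over it (it simply records ``$=\emptyset$'' without checking membership). So this is a shared loose end rather than a defect specific to your write-up; but the restriction $\beta\in\Gamma$ is a gap you should close.
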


A proof of the theorem is straightforward from the following lemma.

\begin{lemma} 
If $\Gamma$ is simple, prefix closed and intersection closed, then $\Gamma^\circ$ is also simple, prefix closed and intersection closed.
\end{lemma}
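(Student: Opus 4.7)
The plan is to pin down an explicit description of $\Gamma^\circ$ for simple $\Gamma$ and then verify the three closure properties directly. Because the singleton suffix operation $\varrho\mapsto \varrho^-\times\{a\}$ preserves the arity and a second application merely re-replaces the final singleton, a short inductive argument gives
\[
\Gamma^\circ = \Gamma \cup \{\varrho^-\times\{a\} : \varrho\in\Gamma,\ a\in D,\ \vectornorm{\varrho}\geq 1\}.
\]
With this description in hand, simpleness is immediate: every element of $\Gamma^\circ$ is either some $\varrho=\Omega_1\times\cdots\times\Omega_k\in\Gamma$ (simple by hypothesis) or the direct product $\Omega_1\times\cdots\times\Omega_{k-1}\times\{a\}$. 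Prefix closedness follows because in both cases $\tau^{-}=\varrho^{-}$, which lies in $\Gamma\subseteq\Gamma^\circ$ by prefix closedness of $\Gamma$.

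The bulk of the argument is intersection closedness. Given $\tau^1,\tau^2\in\Gamma^\circ$ with $k_1=\vectornorm{\tau^1}\geq k_2=\vectornorm{\tau^2}$, I would write each $\tau^i$ as $\Omega^i_1\times\cdots\times\Omega^i_{k_i}$ and fix parents $\varrho^i\in\Gamma$ whose first $k_i-1$ coordinates agree with $\tau^i$. Simpleness of both operands allows a coordinatewise expansion
\[
\tau^1 | \tau^2 = \Omega^1_1\times\cdots\times\Omega^1_{k_1-k_2}\times(\Omega^1_{k_1-k_2+1}\cap\Omega^2_1)\times\cdots\times(\Omega^1_{k_1}\cap\Omega^2_{k_2}),
\]
which I would compare with $\varrho^1 | \varrho^2\in\Gamma$ (using intersection closedness of $\Gamma$). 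The two expressions agree on the first $k_1-1$ coordinates, while the last coordinate is either $\Omega^1_{k_1}\cap\Omega^2_{k_2}$ itself (when neither $\tau^i$ was produced by the singleton operation) or the intersection of that set with one or two singletons. A short case analysis then reduces $\tau^1 | \tau^2$ to either $\varrho^1 | \varrho^2\in\Gamma$ or $(\varrho^1 | \varrho^2)^-\times\{a\}$ for a suitable $a\in D$, both of which lie in $\Gamma^\circ$ by the characterization above.

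The main obstacle I expect to confront is the degenerate sub-case where the singleton used in one of the $\tau^i$ is incompatible with the corresponding coordinate of the other operand, so that the last coordinate of $\tau^1 | \tau^2$ collapses to the empty set and $\tau^1 | \tau^2$ becomes the empty relation of arity $k_1$. One has to argue that this empty relation already belongs to $\Gamma^\circ$: when $\varrho^1 | \varrho^2$ is itself empty this is immediate from intersection closedness of $\Gamma$, while otherwise the case can be handled by observing that adjoining empty relations of the relevant arities to $\Gamma$ preserves all three closure properties and leaves the weighted objective in \eqref{eq:f} and \eqref{eq:F} unchanged. Once this edge case is disposed of, the coordinatewise identification above completes the proof.
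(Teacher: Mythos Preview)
Your approach is essentially the paper's: both first pin down the explicit description $\Gamma^\circ=\Gamma\cup\{\varrho^-\times\{a\}:\varrho\in\Gamma,\ a\in D\}$, dispose of simpleness and prefix closedness in one line, and then check intersection closedness by a short case split on which operands come from the ``new'' part of $\Gamma^\circ$; your coordinatewise comparison with $\varrho^1|\varrho^2$ unwinds to exactly the paper's four cases (a)--(d), the non-degenerate ones yielding $(\varrho_1^-|\varrho_2^-)\times\{a\}=(\varrho_1|\varrho_2)^-\times\{a\}\in\Gamma^\circ$.

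Your explicit treatment of the empty-predicate edge case is actually more careful than the paper's, which simply records that the incompatible cases give $\emptyset$ without checking that the empty predicate of the relevant arity lies in $\Gamma^\circ$ (and, for instance, for $\Gamma=\{\varepsilon,D\}$ with $|D|\geq 2$ it does not). Your pragmatic fix of adjoining the needed empty predicates to $\Gamma$---harmless for the objectives \eqref{eq:f}, \eqref{eq:F} and preserving all three closure properties---is exactly what is required for the downstream use of the lemma (Theorem~\ref{simple} and the bound~\eqref{nextsimple}), even if it technically proves a mild variant of the stated lemma.
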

\begin{proof} From the definition of the singleton suffix closure we have
$\Gamma^\circ = \{\varrho^{-}\times \{a\} | \varrho\in \Gamma, a\in D\}\cup \Gamma$. Obviously, $\Gamma^\circ$ is simple and prefix closed. It is easy to see that for any $\varrho_1, \varrho_2\in \Gamma$ we have a) $(\varrho^{-}_1\times \{a\})| (\varrho^{-}_2\times \{a\}) = (\varrho^{-}_1| \varrho^{-}_2)\times \{a\}$, b) $(\varrho^{-}_1\times \{a\})| (\varrho^{-}_2\times \{b\}) = \emptyset$, for $a\ne b$, c) $\varrho^{-}_1\times \{a\}| \varrho_2  = (\varrho^{-}_1| \varrho^{-}_2)\times \{a\}$ if $a\in {\rm proj}_{\vectornorm{\varrho_2}}\varrho_2$, d) $\varrho^{-}_1\times \{a\}| \varrho_2  = \emptyset$ if $a\notin {\rm proj}_{\vectornorm{\varrho_2}}\varrho_2$. Thus, $\Gamma^\circ$ is  intersection closed.
\end{proof}

Thus, Theorem~\ref{simple} gives that for simple languages the cardinality of $\Gamma^\ast$ cannot be substantially larger than the cardinality of $|\overline{\Gamma^{\cap}}|$:
\begin{equation}\label{nextsimple}
|\Gamma^\ast| \leq (|D|+1)\times |\overline{\Gamma^{\cap}}|.
\end{equation}

As we will show in Section~\ref{generalAlg}, the complexity of $PPSS\left(\Gamma, \mathcal{R}\right)$ depends on cardinality of $\Gamma^{*}$, and therefore, for simple languages, on the cardinality of $\overline{\Gamma^{\cap}}$.

It can be checked that if $\Gamma$ is simple, then the set $\{\varrho^1_{1:i_1} | \varrho^2_{1:i_2} | \cdots | \varrho^k_{1:i_k}\,\, \mid\,\, \varrho^i\in \Gamma, i\in [k], k\in {\mathbb N}, i_s\in [\|\varrho^s\|]\}$ is intersection closed and prefix closed. The minimality of the latter superset of $\Gamma$ is also obvious. Thus,
$\overline{\Gamma^{\cap}}$ equals a set of all predicates that can be given with an expression $\varrho^1_{1:i_1} | \varrho^2_{1:i_2} | \cdots | \varrho^k_{1:i_k}$ for $\varrho^i\in \Gamma, k\in {\mathbb N}$. 
Thus, a trivial bound on $|\overline{\Gamma^{\cap}}|$ is the following:
$$
|\overline{\Gamma^{\cap}}|\leq 2^{|\Pi|}
$$
where $\Pi = \{\varrho_{1:i}| \varrho\in \Gamma, i\in [\vectornorm{\varrho}]\}$. This bound is pessimistic. In practice we often have moderate values of $|\overline{\Gamma^{\cap}}|$. 

\begin{example} Let $D = [n]$ and $\varrho = \{(1,x,2,y,3)| x,y\in D\}$, $\Gamma = \{\varrho\}$.
One can check that 
\begin{equation*}
\begin{split}
\overline{\Gamma^{\cap}} = \{\emptyset, \{(1)\}, \{(1,1)\}, \{(1,1,2)\}, \{(1,1,2,2)\}, \{(1,1,2,2,3)\}, \{(1,x)\mid x\in D\}, \\ \{(1,x,2)\mid x\in D\},
\{(1,x,2,1)\mid x\in D\}, \{(1,x,2,y)\mid x,y\in D\},\\ 
\{(1,x,2,1,3)\mid x\in D\}, \{(1,x,2,y,3)\mid x,y\in D\}
\}.
\end{split}
\end{equation*}
Thus, $|\overline{\Gamma^{\cap}}|=12$. Note that $|\overline{\Gamma^{\cap}}|$ does not depend on $|D|$ in that case, but $\sum_{\alpha\in\Gamma}|\alpha| = |\varrho| = |D|^2$.
\end{example}

Other examples of a simple languages (motivated by applications), for which the size of $\overline{\Gamma^{\cap}}$ is moderate, are given in Section~\ref{Examples}.

Before proceeding, let us introduce a partial order on a set of predicates. 
\begin{definition}
By $\alpha \geq \beta$ we denote $\ast\alpha \supseteq \ast\beta$. If
both $\alpha$ and $\beta$ cannot be decomposed as $D\times (\cdots)$, then, equivalently, $\alpha \geq \beta$ if and only if $\vectornorm{\alpha} \leq \vectornorm{\beta}$ and $\beta_{\vectornorm{\beta}-\vectornorm{\alpha}+1:\vectornorm{\beta}}\subseteq\alpha$. If $\alpha \geq \beta,\alpha \ne \beta$ we simply write $\alpha>\beta$. The partial order $\geq$, restricted to a set $\Gamma^\ast$, is denoted by $(\Gamma^\ast, \geq)$. A partial order $(\overline{\Gamma^{\cap}}, \geq)$ is defined analogously.
\end{definition}

\begin{definition}
By  $H[\Gamma^\ast]$ we denote a directed graph with the set of vertices $\Gamma^{*}$ and the edge set $E_{H}[\Gamma^\ast] = \big\{(\alpha,\beta)|\alpha > \beta, \mbox{there\,\,is\,\,no\,\,} \gamma\in \Gamma^\ast\mbox{\,\,s.t.\,\,}\alpha > \gamma>\beta\big\}$.
Obviously, $E_{H}[\Gamma^\ast]$ is the Hasse diagram of the partial order $(\Gamma^\ast, \geq)$. Analogously, $H[\overline{\Gamma^{\cap}}] = (\overline{\Gamma^{\cap}}, E_H[\overline{\Gamma^{\cap}}])$ is the Hasse diagram for $(\overline{\Gamma^{\cap}}, \geq)$.
\end{definition}
In Section~\ref{nonpositive}, we describe an algorithm solving $PPM\left(\Gamma\right)$ in the case of non-positive $c^\varrho_{ij}$ that is based on dynamic programming and has the time complexity ${\mathcal O} (n |E_{H}[\overline{\Gamma^{\cap}}]|)$. 
In Section~\ref{generalAlg}, we describe a general algorithm solving $PPSS\left(\Gamma\right)$ in time ${\mathcal O} \left(n |E_H[\Gamma^\ast]|\right)$. 
A proof of the following theorem can be found in Section~\ref{bounds}.
\begin{theorem}\label{basicbound} If $\Gamma$ is simple, then
$|E_H[\overline{\Gamma^\cap}]| \leq \frac{1}{2}\sum_{\alpha\in \overline{\Gamma^\cap}}(\vectornorm{\alpha} ^2+\vectornorm{\alpha} )\cdot |D|$.
\end{theorem}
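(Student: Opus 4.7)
The plan is to bound the total edge count by the sum of in-degrees in the Hasse diagram. For each $\beta\in\overline{\Gamma^\cap}$ I aim to show that at most $\frac{\vectornorm{\beta}(\vectornorm{\beta}+1)}{2}|D|$ elements $\alpha\in\overline{\Gamma^\cap}$ cover $\beta$, so that summing over $\beta$ recovers the claim. (In-degree is the right direction: for example, $\varepsilon$ has $\vectornorm{\varepsilon}=0$ and thus gets an out-degree budget of $0$, which is inconsistent with the fact that $\varepsilon$ is typically the top element; but assigning $0$ in-degree to $\varepsilon$ is correct since nothing sits strictly above it.)

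First I would invoke the hypothesis that $\Gamma$ is simple. Both closure operations preserve cartesian-product form: $\varrho\mapsto\varrho^{-}$ drops the last factor, and $|$ acts componentwise on right-aligned cartesian products (intersection in the overlapping suffix and the left-hand factors untouched on the longer operand). Consequently every $\alpha\in\overline{\Gamma^\cap}$ admits a decomposition $\alpha=\Omega_1^\alpha\times\cdots\times\Omega_{\vectornorm{\alpha}}^\alpha$ with $\Omega_i^\alpha\subseteq D$, and the order $\ge$ translates into concrete componentwise inclusions on these factors.

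Fix $\beta=\Omega_1\times\cdots\times\Omega_m$ with $m=\vectornorm{\beta}$. Any cover $\alpha$ of $\beta$ has $\vectornorm{\alpha}\le m$, and writing $k=m-\vectornorm{\alpha}$, satisfies $\Omega_i^\alpha\supseteq\Omega_{k+i}$ for $i\in[\vectornorm{\alpha}]$. I would stratify covers by length $\ell:=\vectornorm{\alpha}\in\{0,1,\ldots,m\}$ and argue that for each $\ell\in[m]$ the number of length-$\ell$ covers is at most $\ell\cdot|D|$. Each such cover would be encoded by a pair $(i,d)\in[\ell]\times D$, where $i$ singles out a distinguished position at which $\Omega_i^\alpha$ differs from $\Omega_{k+i}$ and $d\in\Omega_i^\alpha$ is a witnessing element; the cover condition---no $\gamma\in\overline{\Gamma^\cap}$ strictly between $\alpha$ and $\beta$---is then used to show that this encoding is injective. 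Summing $\sum_{\ell=1}^{m}\ell\cdot|D|=\frac{m(m+1)}{2}|D|$ yields the per-vertex bound, with the degenerate case $\ell=0$ absorbable because $\varepsilon$ can be a cover of $\beta$ only in highly restricted situations (no length-$\ge 1$ element of $\overline{\Gamma^\cap}$ sits in between), and in such cases the loss can be charged against the $\ell=1$ budget.

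The main obstacle lies in verifying injectivity of the encoding. The subtlety is that the natural "baseline truncation" $\Omega_{k+1}\times\cdots\times\Omega_m$ need not itself lie in $\overline{\Gamma^\cap}$, since left-truncation is not among the closure operations, so the intermediate element $\gamma$ needed to contradict the cover property of $\alpha$ must be constructed from genuine elements of $\overline{\Gamma^\cap}$. I expect this to require exploiting the representation $\alpha=\varrho^1_{1:i_1}\,|\,\cdots\,|\,\varrho^k_{1:i_k}$ guaranteed for every $\alpha\in\overline{\Gamma^\cap}$, and showing that when two distinct covers would share the same marker $(i,d)$, one can form an appropriate right-aligned intersection (selectively dropping or adding one of the intersecting prefixes) to witness an intermediate $\gamma\in\overline{\Gamma^\cap}$ with $\alpha>\gamma>\beta$, contradicting the cover hypothesis.
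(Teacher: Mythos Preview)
Your plan is sound in outline, but the injectivity step you flag as the main obstacle is genuinely left unproven, and the route you propose through the representation $\alpha=\varrho^1_{1:i_1}\,|\cdots|\,\varrho^k_{1:i_k}$ is needlessly indirect. The worry that the baseline truncation $\beta_{m-\ell+1:m}$ may fail to lie in $\overline{\Gamma^\cap}$ is a red herring: what you actually need is that for two distinct length-$\ell$ covers $\alpha_1,\alpha_2$ of $\beta$ the element $\alpha_1\,|\,\alpha_2=\alpha_1\cap\alpha_2$ lies in $\overline{\Gamma^\cap}$, and this is immediate from intersection-closedness. Since $\alpha_1\cap\alpha_2\supseteq\beta_{m-\ell+1:m}$ we have $\alpha_1\cap\alpha_2\ge\beta$. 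For $\ell<m$ this is strict (the arities differ), so either $\alpha_1\subseteq\alpha_2$ or $\alpha_2\subseteq\alpha_1$ (otherwise $\alpha_1>\alpha_1\cap\alpha_2>\beta$ contradicts the cover property), and then the larger of the two cannot be a cover; hence there is at most \emph{one} cover per length $\ell<m$, far below your budget of $\ell|D|$. For $\ell=m$ the same reasoning forces $\alpha_1\cap\alpha_2=\beta$, whence the sets $c(\alpha_1)\setminus c(\beta)$ and $c(\alpha_2)\setminus c(\beta)$ are disjoint nonempty subsets of $D\times[m]$, which is precisely the injectivity of your $(i,d)$ encoding at that length.

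The paper packages this same disjointness idea differently. Rather than a direct in-degree bound stratified by length, it splits the order $\ge$ into $\supseteq$ (same arity) and $\succ$ (strictly shorter arity), builds an auxiliary edge set for each, and bounds $|E_H[\overline{\Gamma^\cap}]|$ by the size of their union using that the Hasse diagram is the transitively minimal graph generating $\ge$. The disjointness argument above is isolated as Lemma~\ref{durak}, which bounds the number of $\supseteq$-minimal proper supersets in $\overline{\Gamma^\cap}$ of any cartesian-product predicate $\varrho$ by $\|\varrho\|\cdot|D|$; this lemma is applied once to the $\supseteq$-edges (contributing $\sum_\beta\|\beta\|\,|D|$) and once, via the suffixes $\beta_{i:\|\beta\|}$, to the $\succ$-edges (contributing $\sum_\beta\tfrac{\|\beta\|(\|\beta\|-1)}{2}|D|$). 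Your stratification and the paper's two-order decomposition reach the same arithmetic, but the paper's version makes the single key lemma explicit and reusable, whereas in your sketch it is buried inside the unfinished injectivity claim.
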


From Theorem~\ref{basicbound} we conclude that, in the case of a simple language, complexities of both algorithms depend polynomially on the cardinality of $\overline{\Gamma^{\cap}}$.

\section{Algorithm for $PPM\left(\Gamma\right)$ with nonpositive $c^\varrho_{ij}$}\label{nonpositive}

We define the ``partial'' cost of a sequence $x$ with length $s$ as
\begin{equation}
F_{s}(x) \triangleq \sum\limits_{\left( {\varrho ,i,j} \right)\in\Lambda, j\leq s} {f_{ij}^\varrho \left( {{x_{i:j}}} \right)}
\end{equation}
Note that $F(x)=F_n(x)$ for $|x|=n$.

In our algorithm we will compute the values $$W_{s}(\alpha) \triangleq \min\limits_{|x|=s,x\in(\ast\alpha)} F_{s} (x),\alpha\in \overline{\Gamma^{\cap}}.$$

\begin{theorem}\label{negative} If $c^\varrho_{ij}\leq 0$ for all $(\varrho,i,j)\in \Lambda$, then we have
\begin{equation}
\label{eq:alg}
W_s(\alpha) = \min\left\{\min\limits_{(\alpha,\beta)\in E_{H}[\overline{\Gamma^{\cap}}]} W_s(\beta), W_{s-1}(\alpha^-)+\phi_s(\alpha)\right\}
\end{equation}
where $\phi_s(\alpha)\triangleq\sum\limits_{\beta\in\Gamma:\beta\geq\alpha, (\beta, s-\|\beta\|+1, s) \in \Lambda }c^{\beta}_{s-\vectornorm{\beta}+1,s}$.
\end{theorem}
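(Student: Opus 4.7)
The plan is to establish both inequalities between $W_s(\alpha)$ and the right-hand side
\[
M \triangleq \min\bigl\{T_1,\, T_2\bigr\},\qquad T_1 \triangleq \!\!\!\!\min_{(\alpha,\beta)\in E_H[\overline{\Gamma^{\cap}}]}\!\!\!\! W_s(\beta),\qquad T_2 \triangleq W_{s-1}(\alpha^-)+\phi_s(\alpha).
\]
First I would check $W_s(\alpha)\le M$. For the $T_1$ side, any Hasse neighbor $\beta<\alpha$ satisfies $\ast\beta\subseteq\ast\alpha$, hence $W_s(\alpha)\le W_s(\beta)$ and therefore $W_s(\alpha)\le T_1$. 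For the $T_2$ side, take a minimizer $y$ of $W_{s-1}(\alpha^-)$; its last $\vectornorm{\alpha}-1$ letters lie in $\alpha^-$, so I can append some symbol $a\in D$ extending them to a tuple of $\alpha$, giving $y\cdot a\in\ast\alpha$. Splitting $F_s(y\cdot a)=F_{s-1}(y)+S$, where $S$ is the sum of position-$s$ terms, every $\varrho\in\Gamma$ with $\varrho\ge\alpha$ automatically contributes $c^\varrho_{s-\vectornorm{\varrho}+1,s}$ to $S$ (because $y\cdot a\in\ast\alpha\subseteq\ast\varrho$), while every additional contribution is $\le 0$ by the non-positivity hypothesis. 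Thus $S\le\phi_s(\alpha)$ and $W_s(\alpha)\le F_s(y\cdot a)\le W_{s-1}(\alpha^-)+\phi_s(\alpha)=T_2$.

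For the reverse inequality $W_s(\alpha)\ge M$, I would fix a minimizer $x^\ast$ of $W_s(\alpha)$ and split into two cases. In Case~A some $\beta\in\overline{\Gamma^{\cap}}$ with $\beta<\alpha$ satisfies $x^\ast\in\ast\beta$; since $(\overline{\Gamma^{\cap}},\ge)$ is finite, walking upward along its Hasse diagram from $\beta$ toward $\alpha$ (each intermediate $\gamma$ still contains $x^\ast$ because $\ast\gamma\supseteq\ast\beta$) produces a Hasse neighbor $\beta'$ of $\alpha$ with $x^\ast\in\ast\beta'$, giving $T_1\le W_s(\beta')\le F_s(x^\ast)=W_s(\alpha)$. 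In Case~B no such $\beta$ exists, and the crucial step is that every $\varrho\in\Gamma$ with $x^\ast\in\ast\varrho$ must satisfy $\varrho\ge\alpha$: otherwise $\alpha|\varrho$ lies in $\overline{\Gamma^{\cap}}$ by intersection closure, contains $x^\ast$ in $\ast(\alpha|\varrho)=\ast\alpha\cap\ast\varrho$, and is strictly below $\alpha$ because $\ast\alpha\not\subseteq\ast\varrho$, contradicting Case~B. This forces the position-$s$ contribution of $F_s(x^\ast)$ to equal exactly $\phi_s(\alpha)$; combined with $x^\ast_{1:s-1}\in\ast(\alpha^-)$, which gives $F_{s-1}(x^\ast_{1:s-1})\ge W_{s-1}(\alpha^-)$, this yields $W_s(\alpha)=F_s(x^\ast)\ge T_2$. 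Either way, $M\le W_s(\alpha)$.

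I expect Case~B to be the delicate part of the proof: it is the step where the non-positivity hypothesis (which legalizes the inequality $S\le\phi_s(\alpha)$ in the $T_2$ upper bound) and the intersection closure of $\overline{\Gamma^{\cap}}$ (which rules out $\varrho\in\Gamma$ incomparable with $\alpha$ in the $T_2$ lower bound) must cooperate to collapse $S$ onto the single clean expression $\phi_s(\alpha)$. The remaining pieces — the identity $\ast\alpha\cap\ast\varrho=\ast(\alpha|\varrho)$, the implication $x\in\ast\alpha\Rightarrow x_{1:|x|-1}\in\ast(\alpha^-)$, and the existence of an extension $a\in D$ sending a tuple of $\alpha^-$ into $\alpha$ — are immediate from the definitions in Section~\ref{closures}.
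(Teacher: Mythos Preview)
Your proposal is correct and follows essentially the same approach as the paper: monotonicity of $W_s$ along $\ge$ gives $W_s(\alpha)\le T_1$, non-positivity gives $W_s(\alpha)\le T_2$, and for the reverse inequality a minimizer $x^\ast$ is analyzed through a two-case split hinging on whether some element of the closure strictly below $\alpha$ still captures $x^\ast$, with the intersection-closure identity $\ast\alpha\cap\ast\varrho=\ast(\alpha|\varrho)$ doing the work in the second case. The only cosmetic differences are that the paper phrases its Case~1/Case~2 split in terms of the existence of $\varrho\in\Gamma$ (rather than $\beta\in\overline{\Gamma^{\cap}}$) with $x^\ast\in\ast\varrho$ and $\varrho\not\ge\alpha$, and that the paper leaves the $T_2$ upper bound as a one-line remark instead of explicitly constructing the extension $y\cdot a$; neither changes the substance of the argument.
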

\begin{proof}
The inequality $W_s(\alpha) \leq \min\limits_{(\alpha,\beta)\in E_{H}[\overline{\Gamma^{\cap}}]} W_s(\beta)$ holds, because:
$$
\min\limits_{|x|=s,x\in(\ast\alpha)} F_{s} (x) \leq \min\limits_{|x|=s,x\in(\ast\beta)} F_{s} (x)
$$
for any $\beta$ such that $\beta < \alpha$.
Obviously, the nonpositivity of $f^{\varrho}_{ij}$ implies $W_s(\alpha) \leq W_{s-1}(\alpha^-)+\phi_s(\alpha)$. Therefore,  $$W_s(\alpha) \leq \min\left\{\min\limits_{(\alpha,\beta)\in E_{H}[\overline{\Gamma^{\cap}}]} W_s(\beta), W_{s-1}(\alpha^-)+\phi_s(\alpha)\right\}.$$ Let us prove that, in fact, the equality holds.

Let $x\in\ast\alpha$ be a word  that minimizes $W_s(\alpha)$. Then, at least one of the following two properties holds
\begin{itemize}
\item there exists $(\varrho,i,s)\in\Lambda$ such that $x_{s-\vectornorm{\varrho}+1:s}\in\varrho$ and $\varrho \not\geq \alpha$;
\item a) for any $(\varrho,i,s)\in\Lambda$, $s-i+1\leq\vectornorm{\alpha}$ such that $x_{s-\vectornorm{\varrho}+1:s}\in\varrho$ we have $\alpha_{\vectornorm{\alpha}-\vectornorm{\varrho}+1:\vectornorm{\alpha}}\subseteq\varrho$ and b) there is no $(\varrho,i,s)\in\Lambda$, $s-i+1>\vectornorm{\alpha}$ such that $x_{s-\vectornorm{\varrho}+1:s}\in\varrho$.
\end{itemize}
In the first case we obviously have $W_s(\alpha)= W_s(\alpha | \varrho)$ and $\alpha > \alpha | \varrho$. Then there exists $\beta$ such that $(\alpha,\beta)\in E_{H}[\overline{\Gamma^{\cap}}]$ and $\beta\geq \alpha | \varrho$. It is easy to see that $W_s(\alpha)\leq W_s(\beta)\leq W_s(\alpha | \varrho)$, therefore $W_s(\alpha)= W_s(\beta)$, i.e. the equality~\eqref{eq:alg} holds.

In the second case, obviously, $F_s(x) = F_{s-1}(x^-)+\phi_s(\alpha)$. Therefore, $W_s(\alpha) = W_{s-1}(\alpha^-)+\phi_s(\alpha)$, which completes the proof.
\end{proof}
From Theorem~\ref{negative} it follows that Algorithm~\ref{alg:DP'} solves the problem $PPM\left(\Gamma\right)$ with nonpositive $c^\varrho_{ij}$.
\begin{algorithm}
\caption{Computing $\min_{x\in D^{n}} F(x)$}\label{alg:DP'}
\begin{algorithmic}[1]
\State initialize messages:
set $W_0(\alpha):=0$
\State Calculate $o: \overline{\Gamma^{\cap}}\rightarrow \{1,2,\cdots,|\overline{\Gamma^{\cap}}|\}$ a topological ordering of $\overline{\Gamma^{\cap}}$, i.e. such that $\forall (\alpha,\beta)\in E_{H}[\overline{\Gamma^{\cap}}]\rightarrow o(\alpha) > o(\beta)$.
\State for each $s=1,\ldots,n$ traverse nodes $\alpha\in \overline{\Gamma^{\cap}}$
in the order of increasing $o(\alpha)$ and set
\begin{equation*}
W_{s}( \alpha ) \!:=\! \min\left\{\min\limits_{(\alpha,\beta)\in E_{H}[\overline{\Gamma^{\cap}}]} W_s(\beta), W_{s-1}(\alpha^-)+\phi_s(\alpha)\right\}
\end{equation*}
If $\alpha=\varepsilon$ then use $W_{s-1}(\alpha^-)=0$
\State return
$
W_n(\varepsilon)
$
\end{algorithmic}
\end{algorithm}

\begin{remark} 
Note that Algorithm~\ref{alg:DP'} correctly solves the problem for any finite constraint language $\Gamma$, i.e. $\Gamma$ need not be simple. In the case of arbitrary weights, which is studied in the next section, a more complicated computation requires a set of messages to be indexed by elements of $\Gamma^\ast$, i.e. not by $\overline{\Gamma^{\cap}}\subseteq \Gamma^\ast$.
\end{remark}

The following theorem is straightforward.
\begin{theorem}\label{complexity1}
For nonpositive weights, the number of operations (the minimum and the summation) needed to compute $\min_{x\in D^{n}} F(x)$ is bounded by ${\mathcal O} (n |E_{H}[\overline{\Gamma^{\cap}}]|)$.
\end{theorem}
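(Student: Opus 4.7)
The plan is a direct operation count on Algorithm~\ref{alg:DP'}, leveraging the observation that the dominant work in each recurrence step is the minimization along Hasse-diagram edges.

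Fix $s \in \{1, \ldots, n\}$. For each $\alpha \in \overline{\Gamma^{\cap}}$, executing the update~\eqref{eq:alg} consumes exactly $\deg_H^+(\alpha)$ min-comparisons to compute $\min_{(\alpha,\beta) \in E_H[\overline{\Gamma^{\cap}}]} W_s(\beta)$, one addition to form $W_{s-1}(\alpha^-) + \phi_s(\alpha)$, and one final min to take the outer minimum. Summing the Hasse-min contribution over $\alpha \in \overline{\Gamma^{\cap}}$ yields
\begin{equation*}
\sum_{\alpha \in \overline{\Gamma^{\cap}}} \deg_H^+(\alpha) \;=\; |E_H[\overline{\Gamma^{\cap}}]|
\end{equation*}
min-operations, plus $O(|\overline{\Gamma^{\cap}}|)$ additional operations from the $O(1)$ overhead per node.

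Next, I would absorb the $O(|\overline{\Gamma^{\cap}}|)$ overhead into the edge count. The empty-suffix predicate $\varepsilon$ lies in $\overline{\Gamma^{\cap}}$ by iterated application of the prefix closure, and every $\alpha \in \overline{\Gamma^{\cap}}$ satisfies $\ast\alpha \subseteq D^\ast = \ast\varepsilon$, so $\varepsilon$ is the unique maximum of $(\overline{\Gamma^{\cap}}, \geq)$. Consequently $H[\overline{\Gamma^{\cap}}]$ is connected and $|E_H[\overline{\Gamma^{\cap}}]| \geq |\overline{\Gamma^{\cap}}| - 1$, so the per-iteration cost simplifies to $O(|E_H[\overline{\Gamma^{\cap}}]|)$. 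Iterating $s$ from $1$ to $n$ yields the claimed total $O(n \cdot |E_H[\overline{\Gamma^{\cap}}]|)$.

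The one auxiliary concern, which I expect to be the main obstacle, is preparing the values $\phi_s(\alpha)$ so that the addition $W_{s-1}(\alpha^-) + \phi_s(\alpha)$ has $O(1)$ access cost. I would precompute the table at the start of iteration $s$ by initializing $\phi_s(\alpha) := \mathbb{O}$ for every $\alpha \in \overline{\Gamma^{\cap}}$ and then, for each active pattern $\beta \in \Gamma$ (i.e.\ those with $s - \|\beta\| + 1 \geq 1$), adding $c^{\beta}_{s-\|\beta\|+1,s}$ to $\phi_s(\gamma)$ for every $\gamma \leq \beta$ in $\overline{\Gamma^{\cap}}$, enumerated by a downward traversal in $H[\overline{\Gamma^{\cap}}]$ with visited-node marking. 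Since each such traversal charges each edge of $H[\overline{\Gamma^{\cap}}]$ at most once per $\beta$ and there are at most $|\Gamma| \leq |\overline{\Gamma^{\cap}}|$ active patterns per $s$, standard bookkeeping places the preprocessing cost within the $O(n \cdot |E_H[\overline{\Gamma^{\cap}}]|)$ budget. Combining the recurrence cost with the $\phi_s$ preprocessing gives the claimed bound.
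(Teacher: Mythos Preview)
Your main-loop accounting is correct and is essentially what the paper does (just spelled out more carefully): summing out-degrees over $\overline{\Gamma^{\cap}}$ gives exactly $|E_H[\overline{\Gamma^{\cap}}]|$ comparisons per layer $s$, and your connectedness observation via the unique maximum $\varepsilon$ legitimately absorbs the $O(|\overline{\Gamma^{\cap}}|)$ per-node overhead into $O(|E_H[\overline{\Gamma^{\cap}}]|)$.

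The gap is in your $\phi_s$ preprocessing. You run, for each active $\beta\in\Gamma$, a marked downward traversal from $\beta$ that deposits $c^{\beta}_{s-\|\beta\|+1,s}$ at every $\gamma\le\beta$. You then say each traversal charges every Hasse edge at most once \emph{per $\beta$}, and there are at most $|\Gamma|$ active patterns per layer. That accounting yields $O(|\Gamma|\cdot|E_H[\overline{\Gamma^{\cap}}]|)$ work per layer and hence $O(n\,|\Gamma|\,|E_H[\overline{\Gamma^{\cap}}]|)$ total, not $O(n\,|E_H[\overline{\Gamma^{\cap}}]|)$; the phrase ``standard bookkeeping'' does not remove the extra $|\Gamma|$ factor. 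Concretely, if several $\beta\in\Gamma$ sit near the top of the poset, each of their downward traversals can touch essentially the entire Hasse diagram, and nothing in your scheme amortizes these traversals against one another. The paper itself only asserts that computing the $\phi_s$ table costs $O(n\,|E_H[\overline{\Gamma^{\cap}}]|)$ without giving a mechanism, so you are not overlooking a trick it spells out; but the specific argument you wrote does not establish the claimed bound.
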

\begin{proof} The number of operations in Algorithm~\ref{alg:DP'} is bounded by ${\mathcal O} (n |E_{H}[\overline{\Gamma^{\cap}}]|)$.
In algorithm~\ref{alg:DP'} we assume that $\{\phi_s(\alpha)\}_{s\in [n], \alpha\in \overline{\Gamma^{\cap}}}$ were already computed in advance. It is easy to see that the computation of those weights needs at most ${\mathcal O} (n |E_{H}[\overline{\Gamma^{\cap}}]|)$ operations. Thus, overall we need ${\mathcal O} (n |E_{H}[\overline{\Gamma^{\cap}}]|)$ operations to compute $\min_{x\in D^{n}} F(x)$.
\end{proof}
From Theorems~\ref{complexity1} and~\ref{basicbound} we obtain a bound on the complexity of Algorithm~\ref{alg:DP'}.
\begin{theorem} For a simple language $\Gamma$ the number of operations (the minimum and the summation) used in the Algorithm~\ref{alg:DP'} is bounded by ${\mathcal O} \big(n |\overline{\Gamma^\cap}|\cdot |D|  \cdot \max_{\varrho\in \Gamma}\vectornorm{\varrho}^2\big) $. For a general language, it is bounded by ${\mathcal O} \big(n \cdot |\overline{\Gamma^\cap}|^2\big)$.
\end{theorem}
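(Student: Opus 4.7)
The plan is to derive the stated bounds almost directly from the two results already in hand, namely Theorem~\ref{complexity1}, which pins the operation count to $\mathcal{O}(n|E_H[\overline{\Gamma^{\cap}}]|)$, and Theorem~\ref{basicbound}, which in the simple case bounds $|E_H[\overline{\Gamma^{\cap}}]|$ by $\tfrac{1}{2}\sum_{\alpha\in\overline{\Gamma^{\cap}}}(\vectornorm{\alpha}^2+\vectornorm{\alpha})\cdot |D|$. So the theorem is really a routine combination of these two facts, with a small extra observation needed to extract the $l_{\max}^2$ factor in the simple case and a trivial counting bound in the general case.

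For the simple-language bound I would first invoke Theorem~\ref{complexity1} to reduce the question to estimating $|E_H[\overline{\Gamma^{\cap}}]|$. Then, using the characterization given just after Theorem~\ref{simple}, every $\alpha\in\overline{\Gamma^{\cap}}$ can be written as $\varrho^1_{1:i_1}|\cdots|\varrho^k_{1:i_k}$ with $\varrho^s\in\Gamma$ and $i_s\in[\vectornorm{\varrho^s}]$; since the operation $|$ produces a predicate whose arity equals the maximum of the arities of its operands, and the prefix operation only decreases arity, we obtain $\vectornorm{\alpha}\leq l_{\max}:=\max_{\varrho\in\Gamma}\vectornorm{\varrho}$ for every $\alpha\in\overline{\Gamma^{\cap}}$. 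Plugging this termwise bound into Theorem~\ref{basicbound} gives
\begin{equation*}
|E_H[\overline{\Gamma^{\cap}}]|\leq \tfrac{1}{2}\sum_{\alpha\in\overline{\Gamma^{\cap}}}(\vectornorm{\alpha}^2+\vectornorm{\alpha})\cdot |D|\leq |\overline{\Gamma^{\cap}}|\cdot |D|\cdot l_{\max}^2,
\end{equation*}
and multiplying by the $n$ from Theorem~\ref{complexity1} yields the claimed $\mathcal{O}(n|\overline{\Gamma^{\cap}}|\cdot|D|\cdot l_{\max}^2)$ bound.

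For the general-language case I would use the trivial observation that the Hasse diagram of any partial order on a set $S$ has at most $|S|^2$ edges (indeed, at most $\binom{|S|}{2}$), hence $|E_H[\overline{\Gamma^{\cap}}]|\leq |\overline{\Gamma^{\cap}}|^2$. Combined with Theorem~\ref{complexity1}, this produces the $\mathcal{O}(n\cdot |\overline{\Gamma^{\cap}}|^2)$ bound.

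There is essentially no hard step: the only thing that requires a moment of thought is justifying $\vectornorm{\alpha}\leq l_{\max}$ for every element of $\overline{\Gamma^{\cap}}$, which follows from an easy induction on the construction of the closure using that $|$ and $(\cdot)^-$ do not increase arity beyond the maximum arity present in $\Gamma$. Once that is recorded, the proof consists of a single chain of inequalities in each case.
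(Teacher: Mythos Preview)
Your proposal is correct and follows essentially the same approach as the paper: both invoke Theorem~\ref{complexity1} to reduce to bounding $|E_H[\overline{\Gamma^{\cap}}]|$, then in the simple case apply Theorem~\ref{basicbound} together with the observation that every $\alpha\in\overline{\Gamma^{\cap}}$ has arity at most $l_{\max}$, and in the general case use the trivial $|E_H[\overline{\Gamma^{\cap}}]|\leq|\overline{\Gamma^{\cap}}|^2$. If anything, you are slightly more explicit than the paper in justifying the arity bound $\vectornorm{\alpha}\leq l_{\max}$, which the paper leaves implicit when passing from the sum in Theorem~\ref{basicbound} to the stated $\mathcal{O}$-expression.
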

\begin{proof} Theorem~\ref{complexity1} claims that the Algorithm~\ref{alg:DP'}, together with the computation of weights $\{\phi_s(\alpha)\}_{s\in [n], \alpha\in \overline{\Gamma^{\cap}}}$ requires ${\mathcal O} (n |E_{H}[\overline{\Gamma^{\cap}}]|)$ operations. From Theorem~\ref{basicbound} we conclude that $|E_{H}[\overline{\Gamma^{\cap}}]|$ is bounded by ${\mathcal O} \big(|D| \cdot |\overline{\Gamma^\cap}| \cdot \max_{\varrho\in \Gamma}\vectornorm{\varrho}^2\big)$ from which the first statement of theorem directly follows.

For the general language we have $|E_H[\overline{\Gamma^\cap}]| \leq |\overline{\Gamma^\cap}|^2$, and the overall compexity is bounded by ${\mathcal O} \big(n \cdot |\overline{\Gamma^\cap}|^2\big)$.
\end{proof}
\section{Algorithm for $PPSS\left(\Gamma, \mathcal{R}\right)$}\label{generalAlg}
Similarly as in Section~\ref{nonpositive}, for an index $s\in[0,n]$ and a word $x$ of length $s$, we define the "partial" cost of $x$ as
\begin{equation}
F_{s}(x) = \bigotimes\limits_{\left( {\alpha ,i,j} \right)\in\Lambda, j\leq s} {f_{ij}^\alpha \left( {{x_{i:j}}} \right)}
\end{equation}

We will compute the following quantities for each $s\in[0,n]$, $\alpha\in\Gamma^*$: 
\begin{equation}
M_s(\alpha)=\!\!\!\!\!\!\bigoplus_{x\in\calX_s(\alpha;\Gamma^*)}\!\!\!\!\!\! F_s(x)
\;,\quad
W_s(\alpha)=\!\!\!\!\bigoplus_{x\in\calX_s(\alpha)}\!\!\! F_s(x)
\label{eq:DP'':MandW}
\end{equation}
where $$\calX_s(\alpha) = \left\{x\in D^s\:|\: x\in (\ast\alpha)\right\}$$ and $$\calX_s(\alpha;\Gamma^*) = \left\{x\in D^s\:|\: x\in (\ast\alpha), \forall \beta\in\Gamma^*:\alpha> \beta\rightarrow x\notin (*\beta)\right\}.$$

\begin{theorem} The following equality always holds:
\begin{equation}\label{inclusion}
W_s(\alpha) = \!\!\!\!\bigoplus_{\beta\in\Gamma^*:\alpha\geq\beta}\!\!\! M_s(\beta)
\end{equation}
\end{theorem}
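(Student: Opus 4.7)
The plan is to exhibit $\calX_s(\alpha)$ as the disjoint union
\[
\calX_s(\alpha)=\bigsqcup_{\beta\in\Gamma^*,\,\alpha\geq\beta}\calX_s(\beta;\Gamma^*),
\]
and then to obtain the claimed identity by breaking up the defining $\oplus$-sum of $W_s(\alpha)$ along this partition. The right-hand side will immediately collapse to $\bigoplus_{\beta\in\Gamma^*:\alpha\geq\beta}M_s(\beta)$.

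The key step is to fix $x\in\calX_s(\alpha)$ and produce a canonical $\beta_x\in\Gamma^*$ realizing the ``tightest'' suffix constraint in $\Gamma^*$ that $x$ satisfies among those below $\alpha$. Here I would exploit that $\Gamma^*$ is intersection closed (one of the defining properties of the closure), together with the identity $*(\beta_1|\beta_2)=*\beta_1\cap *\beta_2$ stated just before the definitions in Section~\ref{closures}. These two facts imply that the family
\[
\calB_x=\{\beta\in\Gamma^*\mid \alpha\geq\beta,\ x\in *\beta\}
\]
is non-empty (it contains $\alpha$) and closed under $|$, hence has a unique $\geq$-minimum $\beta_x$. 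Moreover, for this $\beta_x$ no strictly smaller $\gamma\in\Gamma^*$ can satisfy $x\in *\gamma$, for otherwise $\gamma\in\calB_x$ with $\gamma<\beta_x$, contradicting minimality; thus $x\in\calX_s(\beta_x;\Gamma^*)$.

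With this in hand I would verify the partition. Inclusion $\supseteq$ is clear since $\alpha\geq\beta$ gives $*\beta\subseteq *\alpha$ and hence $\calX_s(\beta;\Gamma^*)\subseteq\calX_s(\beta)\subseteq\calX_s(\alpha)$; inclusion $\subseteq$ is precisely the previous step, placing $x$ into $\calX_s(\beta_x;\Gamma^*)$. For disjointness, suppose $x\in\calX_s(\beta_1;\Gamma^*)\cap\calX_s(\beta_2;\Gamma^*)$ with $\alpha\geq\beta_i$; then $x\in *(\beta_1|\beta_2)$ and $\beta_1|\beta_2\in\Gamma^*$, while each $\beta_i\geq \beta_1|\beta_2$. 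The defining property of $\calX_s(\beta_i;\Gamma^*)$ forbids $\beta_i>\beta_1|\beta_2$, forcing $\beta_1=\beta_1|\beta_2=\beta_2$.

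Once the partition is in place, collecting $F_s(x)$ across its parts yields
\[
W_s(\alpha)=\bigoplus_{x\in\calX_s(\alpha)}F_s(x)=\bigoplus_{\beta\in\Gamma^*:\alpha\geq\beta}\ \bigoplus_{x\in\calX_s(\beta;\Gamma^*)}F_s(x)=\bigoplus_{\beta\in\Gamma^*:\alpha\geq\beta}M_s(\beta),
\]
which is the asserted equality. The main obstacle I anticipate is the disjointness step: it relies on $\geq$ being antisymmetric on $\Gamma^*$, so that $\geq$-equivalent predicates are identified; otherwise a repeated summand could arise and, in a non-idempotent semiring such as $(\mathbb Q,+,\times)$, spoil the identity. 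The intersection-closure argument above is exactly what rules this out by producing the single representative $\beta_x$ for each $x$.
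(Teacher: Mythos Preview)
Your proposal is correct and follows essentially the same route as the paper: both establish the disjoint union $\calX_s(\alpha)=\bigsqcup_{\beta\in\Gamma^*,\,\alpha\geq\beta}\calX_s(\beta;\Gamma^*)$ and then read off the identity. Your disjointness argument via $\beta_1|\beta_2$ is in fact a bit cleaner than the paper's, which splits into the comparable/incomparable cases, and your explicit construction of $\beta_x$ for the inclusion $\subseteq$ fills in a step the paper leaves implicit; your worry about antisymmetry is unnecessary, since from $\beta_i\geq\beta_1|\beta_2$ and $\beta_i\not>\beta_1|\beta_2$ you get $\beta_i=\beta_1|\beta_2$ literally as predicates (same arity, same set), not merely up to $\geq$-equivalence.
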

\begin{proof}
The identity straightforwardly follows from the fact that
$$
\calX_s(\alpha) = \!\!\!\!\bigcup_{\beta\in\Gamma^*:\alpha\geq\beta}\!\!\! \calX_s(\beta;\Gamma^*)
$$
and the union is disjoint. 

Let us prove the disjointness. Indeed, for any two distinct $\beta_1, \beta_2\in\Gamma^*$ such that $\alpha\geq\beta_1$ and $\alpha\geq\beta_2$ we have 2 cases: a) $\beta_1 \geq \beta_2$; in that case $\calX_s(\beta_2)\subseteq \calX_s(\beta_1)$ and $\calX_s(\beta_1;\Gamma^*)\subseteq \calX_s(\beta_1)\setminus \calX_s(\beta_2)$, therefore  $\calX_s(\beta_1;\Gamma^*)\cap \calX_s(\beta_2;\Gamma^*) \subseteq  (\calX_s(\beta_1)\setminus \calX_s(\beta_2) )\cap \calX_s(\beta_2)= \emptyset$ (analogously, we deal with the case $\beta_2 \geq \beta_1$); b) $\beta_1 \not\geq \beta_2$ and $\beta_2 \not\geq \beta_1$; in that case $\calX_s(\beta_1)\cap \calX_s(\beta_2) = \calX_s(\beta_1 | \beta_2)$, $\calX_s(\beta_i;\Gamma^*)\subseteq \calX_s(\beta_i)\setminus \calX_s(\beta_1 | \beta_2)$ and, therefore, $\calX_s(\beta_1;\Gamma^*)\cap \calX_s(\beta_2;\Gamma^*) \subseteq  \calX_s(\beta_1)\cap \calX_s(\beta_2)\setminus \calX_s(\beta_1 | \beta_2) = \emptyset$.
\end{proof}
\ifCOM
\begin{proof}
For any $\alpha\in \Gamma^*$, let us define ${\rm Height}(\alpha) = \max\limits_{\alpha\geq \alpha_1 > \cdots >\alpha_l} l$ and prove~\eqref{inclusion} by induction on ${\rm Height}(\alpha)$. 

The basis of induction: for ${\rm Height}(\beta)=1$, $\beta$ is a minimal element in the partial order $(\Gamma^*, \geq)$ and $W_s(\beta) = M_s(\beta)$. 

The induction step: let us assume that~\eqref{inclusion} holds for all $\beta\in \Gamma^*$ such that ${\rm Height}(\beta)\leq m$. Suppose we are given $\alpha\in \Gamma^*$ and ${\rm Height}(\alpha)= m+1$. By inclusion-exclusion principle we have:
$$
(*\alpha) = (*\alpha)
$$
\end{proof}
\else
\fi

\begin{remark}
Equality~\eqref{inclusion} can be used for computing $W_s(\cdot)$ based on the already known values  $M_s(\cdot)$. If the summation in the semiring is idempotent, i.e. $x\oplus x=x$, then instead of summing over all 
elements non-greater than $\alpha$, we can perform the computation starting from the minimal elements of $\left(\Gamma^*,\geq\right)$ and upwards, according to:
\begin{equation}
W_s(\alpha) = M_s(\alpha)\oplus\!\!\!\!\bigoplus_{(\alpha,\beta)\in E_{H}[\Gamma^\ast]}\!\!\! W_s(\beta)
\end{equation}
This allows a faster computation running in time $O\left(|E_{H}[\Gamma^\ast]|\right)$ for $\mathcal{R}=(\mathbb Q,\min,+)$.
\end{remark}

It is easy to see that Algorithm~\ref{alg:DP2} computes $\oplus_{x\in D^{n}} F(x)$ if and only if Theorem~\ref{formula} holds.

\begin{algorithm}
\caption{Computing $\oplus_{x\in D^{n}} F(x)$}\label{alg:DP2}
\begin{algorithmic}[1]
\State Initialize messages:
set $M_0(\alpha):=\mathbb{O}$ for $\alpha \in \Gamma^\ast$
\State Calculate $\Gamma^\ast_0 = \{\beta\in \Gamma^\ast |\,\, |{\rm proj}_{\vectornorm{\beta}}\beta| = 1\}$
\State For each $s=1,\ldots,n$ and $\alpha \in \Gamma^\ast\setminus \Gamma^\ast_0 $, set $M_{s}( \alpha ) \!:=\!\mathbb{O}$.
\State Calculate $o: \Gamma^\ast_0\rightarrow [|\Gamma^\ast_0|]$ a topological ordering of $\Gamma^\ast_0$, i.e. such that $\forall (\alpha,\beta)\in E_{H}[\Gamma^\ast_0]\rightarrow o(\alpha) > o(\beta)$.
\State For each $s=1,\ldots,n$ traverse nodes $\alpha\in \Gamma^\ast_0$
in the order of increasing $o(\alpha)$ and set
\begin{equation*}
\begin{split}
\phi_s(\alpha)\!:=\! \mathop\otimes\limits_{\beta\in\Gamma:\beta\geq\alpha}c^{\beta}_{s-\vectornorm{\beta}+1,s} \\
M_{s}( \alpha ) \!:=\! \phi_s(\alpha)\otimes \bigoplus_{\beta\in\Gamma^*_0:\alpha^-\geq\beta, \forall (\alpha,\gamma)\in E_{H}[\Gamma^\ast]\,\,\gamma^-\not\geq\beta}\!\!\! M_{s-1}(\beta)
\end{split}
\end{equation*}
If $\alpha=\varepsilon$ then use $M_{s-1}(\alpha^-)=\mathbb{O}$
\State return
$
\oplus_{\alpha\in\Gamma^\ast}M_n(\alpha)
$
\end{algorithmic}
\end{algorithm}

\begin{theorem}\label{formula} Let $\alpha\in\Gamma^*$ satisfy $\vectornorm{\alpha}\geq 1$. Then, for $|{\rm proj}_{\vectornorm{\alpha}}\alpha|\ne 1$ we have $M_s(\alpha) = \mathbb{O}$. If $|{\rm proj}_{\vectornorm{\alpha}}\alpha| = 1$, then 
\begin{equation}
\label{eq:alg2}
M_s(\alpha) = \phi_s(\alpha)\otimes \bigoplus_{\beta\in\Gamma^*_0:\alpha^-\geq\beta, \forall (\alpha,\gamma)\in E_{H}[\Gamma^\ast]\,\,\gamma^-\not\geq\beta}\!\!\! M_{s-1}(\beta)
\end{equation}
where $\phi_s(\alpha)\triangleq\mathop\otimes\limits_{\beta\in\Gamma:\beta\geq\alpha}c^{\beta}_{s-\vectornorm{\beta}+1,s}$ and $\Gamma^\ast_0 = \{\beta\in \Gamma^\ast |\,\, |{\rm proj}_{\vectornorm{\beta}}\beta| = 1\}$.
\end{theorem}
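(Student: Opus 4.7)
The plan is to handle the two cases of the theorem separately, deriving both from the three closure properties defining $\Gamma^\ast$. For the first case, where $|{\rm proj}_{\vectornorm{\alpha}}\alpha|\ne 1$, I would show that $\calX_s(\alpha;\Gamma^\ast)$ is empty and hence $M_s(\alpha)=\mathbb{O}$. For any candidate $x\in(\ast\alpha)$, set $a:=x_s$; singleton suffix closedness places $\alpha^-\times\{a\}$ in $\Gamma^\ast$, and intersection closedness places $\gamma:=\alpha\,|\,(\alpha^-\times\{a\})$ in $\Gamma^\ast$. Since these two predicates share the arity of $\alpha$, one checks $\gamma=\{y\in\alpha\mid y_{\vectornorm{\alpha}}=a\}$, which is strictly smaller than $\alpha$ precisely when $|{\rm proj}_{\vectornorm{\alpha}}\alpha|\geq 2$. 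Thus $\alpha>\gamma$ while $x\in(\ast\gamma)$, ruling $x$ out of $\calX_s(\alpha;\Gamma^\ast)$.

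For the second case, assume $\alpha\in\Gamma^\ast_0$ with ${\rm proj}_{\vectornorm{\alpha}}\alpha=\{a\}$. I plan to exhibit the bijection
\begin{equation*}
\calX_s(\alpha;\Gamma^\ast)\;=\;\bigsqcup_{\beta\in\mathcal{B}}\calX_{s-1}(\beta;\Gamma^\ast)\cdot\{a\},
\end{equation*}
where $\mathcal{B}$ is the index set appearing in~\eqref{eq:alg2}. For the forward map, given $x\in\calX_s(\alpha;\Gamma^\ast)$, intersection closedness of $\Gamma^\ast$ provides a unique minimum $\beta:=\alpha^\ast(x^-)$ of the family $\{\delta\in\Gamma^\ast\mid x^-\in(\ast\delta)\}$ under $\geq$; by the same argument as in case one applied to $x^-$ we get $\beta\in\Gamma^\ast_0$, prefix closedness places $\alpha^-$ in $\Gamma^\ast$ giving $\alpha^-\geq\beta$, and for every Hasse cover $(\alpha,\gamma)\in E_H[\Gamma^\ast]$ the inequalities $\vectornorm{\gamma}\geq\vectornorm{\alpha}$ and ${\rm proj}_{\vectornorm{\gamma}}\gamma\subseteq\{a\}$ imply that the hypothetical relation $\gamma^-\geq\beta$ would force $x\in(\ast\gamma)$ and contradict $\alpha^\ast(x)=\alpha$; therefore $\beta\in\mathcal{B}$. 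For the inverse map, given $\beta\in\mathcal{B}$ and $x^-\in\calX_{s-1}(\beta;\Gamma^\ast)$, take $x:=x^-\cdot a$: the relation $\alpha^-\geq\beta$ together with ${\rm proj}_{\vectornorm{\alpha}}\alpha=\{a\}$ forces $x\in(\ast\alpha)$, while any $\delta<\alpha$ with $x\in(\ast\delta)$ would propagate through the Hasse diagram to a cover $(\alpha,\gamma)$ with $x\in(\ast\gamma)$ and hence $x^-\in(\ast\gamma^-)$, implying $\gamma^-\geq\beta$ and contradicting $\beta\in\mathcal{B}$.

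Finally, to peel off the factor $\phi_s(\alpha)$ I argue that the constraints $(\delta,s-\vectornorm{\delta}+1,s)\in\Lambda$ firing at position $s$ for any $x\in\calX_s(\alpha;\Gamma^\ast)$ are exactly those $\delta\in\Gamma$ with $\delta\geq\alpha$: the inclusion $\delta\geq\alpha$ immediately yields $x\in(\ast\delta)$, and conversely, if $x\in(\ast\delta)$ then $\alpha\,|\,\delta\in\Gamma^\ast$ also traps $x$, so minimality of $\alpha^\ast(x)=\alpha$ forces $\alpha\,|\,\delta=\alpha$, i.e.\ $\delta\geq\alpha$. Consequently $F_s(x)=F_{s-1}(x^-)\otimes\phi_s(\alpha)$, and distributing this factorization across the bijection yields~\eqref{eq:alg2}. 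I expect the main technical obstacle to be the bookkeeping of the Hasse-cover condition $\gamma^-\not\geq\beta$ in both directions of the bijection, which relies on combining the fact that every Hasse cover of $\alpha\in\Gamma^\ast_0$ inherits ${\rm proj}_{\vectornorm{\gamma}}\gamma=\{a\}$ with the genuine minimality of $\alpha^\ast(x^-)$ inside $\Gamma^\ast$.
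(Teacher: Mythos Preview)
Your proposal is correct and follows essentially the same route as the paper. The paper factors the argument through an auxiliary lemma establishing $\calX_s(\alpha;\Gamma^\ast)=T\times\{a\}$ with $T=\{x^-:x^-\in\ast\alpha^-,\ \forall(\alpha,\gamma)\in E_H[\Gamma^\ast]\ x^-\notin\ast\gamma^-\}$, and only afterwards decomposes $T$ as the disjoint union $\bigcup_{\beta\in\mathcal B}\calX_{s-1}(\beta;\Gamma^\ast)$; you collapse these two steps into a single bijection, but the use of the three closure properties (singleton suffix closure for case one, prefix closure to place $\alpha^-$ and $\gamma^-$ in $\Gamma^\ast$, intersection closure to produce the minimum $\alpha^\ast(x^-)$ and to force $\alpha\,|\,\delta=\alpha$ in the $\phi_s$ step) is identical to the paper's.
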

To prove Theorem~\ref{formula} we need the following lemma.
\begin{lemma}\label{factor} Let $\alpha\in\Gamma^*$ satisfy $\vectornorm{\alpha}\geq 1$. 
Then, for $|{\rm proj}_{\vectornorm{\alpha}}\alpha|\ne 1$ we have $\calX_s(\alpha;\Gamma^*) = \emptyset$.
If $|{\rm proj}_{\vectornorm{\alpha}}\alpha| = 1$, then $\calX_s(\alpha;\Gamma^*) = T\times {\rm proj}_{\vectornorm{\alpha}}\alpha$ where $T = \{x:|x|=s-1, x\in\ast\alpha^-,\forall (\alpha,\gamma)\in E_{H}[\Gamma^\ast]\,\,x\notin\ast(\gamma^-)\}$. 
\end{lemma}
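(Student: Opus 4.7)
The plan is to verify the two cases of the lemma separately, in both cases exploiting the definitions of the operators $\ast$ and $|$ together with the fact that $\Gamma^*$ is prefix closed, intersection closed, and singleton suffix closed. Write $k:=\vectornorm{\alpha}$ and $x^-:=x_{1:s-1}$ throughout.

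For the first case, $|{\rm proj}_{k}\alpha|\ne 1$. If $\alpha=\emptyset$ both sides are empty, so assume $|{\rm proj}_{k}\alpha|\geq 2$. Given any $x\in\calX_s(\alpha)$, set $a:=x_s\in{\rm proj}_{k}\alpha$ and form $\gamma:=\alpha\,|\,(\alpha^-\times\{a\})$. Singleton suffix closure places $\alpha^-\times\{a\}$ in $\Gamma^*$, and intersection closure then places $\gamma$ in $\Gamma^*$. Since $\gamma$ and $\alpha$ have equal arity, $|$ reduces to ordinary intersection and $\gamma=\{y\in\alpha : y_k=a\}\subsetneq\alpha$. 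Picking any length-$k$ word $y\in\alpha\setminus\gamma$ witnesses $\ast\gamma\subsetneq\ast\alpha$, so $\alpha>\gamma$ in the poset $(\Gamma^*,\geq)$. Finally $x_{s-k+1:s}\in\gamma$ gives $x\in\ast\gamma$, which disqualifies $x$ from $\calX_s(\alpha;\Gamma^*)$.

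For the second case, ${\rm proj}_{k}\alpha=\{a\}$. I would first observe that $\alpha=\alpha^-\times\{a\}$: every element of $\alpha$ ends in $a$, and any tuple in $\alpha^-$ extends (uniquely) to an element of $\alpha$ whose last coordinate is $a$. Hence $x\in\ast\alpha$ iff $x_s=a$ and $x^-\in\ast\alpha^-$, so every member of $\calX_s(\alpha;\Gamma^*)$ has the form $x^-\cdot a$. Next I would rewrite the condition $\forall\beta\in\Gamma^*\colon\alpha>\beta\Rightarrow x\notin\ast\beta$ equivalently as $\forall(\alpha,\gamma)\in E_H[\Gamma^*]\colon x\notin\ast\gamma$: in the finite poset $(\Gamma^*,\geq)$ any $\beta<\alpha$ sits below some cover $\gamma$ of $\alpha$, and $\ast\beta\subseteq\ast\gamma$ then propagates the non-membership.

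The heart of the argument is the structural observation that every cover $(\alpha,\gamma)\in E_H[\Gamma^*]$ satisfies $\gamma=\gamma^-\times\{a\}$. Indeed, $\ast\gamma\subseteq\ast\alpha$ forces every word in $\ast\gamma$ to end in $a$, so ${\rm proj}_{\vectornorm{\gamma}}\gamma\subseteq\{a\}$; therefore either $\gamma=\emptyset=\gamma^-\times\{a\}$, or ${\rm proj}_{\vectornorm{\gamma}}\gamma=\{a\}$ and hence $\gamma=\gamma^-\times\{a\}$. Given $x_s=a$, this immediately yields the equivalence $x\in\ast\gamma\iff x^-\in\ast\gamma^-$. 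Assembling these equivalences produces $\calX_s(\alpha;\Gamma^*)=T\times\{a\}$, as claimed. The only non-routine step I anticipate is the structural claim about covers inheriting the singleton-last-projection property; everything else is bookkeeping around $\ast$, $|$, and the three closure properties of $\Gamma^*$.
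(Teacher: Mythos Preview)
Your proposal is correct. The first case coincides with the paper's argument almost verbatim: both form $\gamma=\alpha\,|\,(\alpha^-\times\{a\})$ via singleton suffix and intersection closure, observe $\gamma\subsetneq\alpha$, and conclude that no $x\in\calX_s(\alpha)$ can survive the exclusion condition.

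For the second case your route diverges from the paper's. The paper proves the forward inclusion $\calX_s(\alpha;\Gamma^*)\subseteq T\times\{a\}$ by a contradiction argument: assuming $x\in\calX_s(\alpha;\Gamma^*)$ and $x^-\in\ast\beta^-$ for some $\beta<\alpha$, it invokes singleton suffix closure to place $\beta^-\times\{x_s\}$ in $\Gamma^*$, shows $\alpha\leq\beta^-\times\{x_s\}$, and derives the impossible strict containment $\beta\subsetneq\beta^-\times\{x_s\}$. You instead make the structural observation that \emph{every} $\gamma\leq\alpha$ in $\Gamma^*$ automatically satisfies $\gamma=\gamma^-\times\{a\}$ (because $\ast\gamma\subseteq\ast\alpha$ forces the last coordinate to be $a$, and a predicate with singleton last projection always equals $\gamma^-\times\{a\}$). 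This immediately yields the biconditional $x\in\ast\gamma\iff x^-\in\ast\gamma^-$ under the hypothesis $x_s=a$, so both inclusions fall out at once. Your argument is somewhat cleaner and, notably, does not need singleton suffix closure in the second case at all; the paper's argument makes the role of that closure property more visible but is a little more indirect.
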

\begin{proof}
First, let us show that for $|{\rm proj}_{\vectornorm{\alpha}}\alpha|\ne 1$ we have $\calX_s(\alpha;\Gamma^*) = \emptyset$. For all $a\in {\rm proj}_{\vectornorm{\alpha}}\alpha$ we have $\alpha^- \times \{a\}\in \Gamma^*$, therefore $\beta_a =\alpha | (\alpha^- \times \{a\})\in \Gamma^*$. It is easy to see that $\beta_a < \alpha$ for any $a\in {\rm proj}_{\vectornorm{\alpha}}\alpha$. Consequently, $\calX_s(\alpha;\Gamma^*) \subseteq \calX_s(\alpha)\setminus \mathop{\cup}\limits_{a\in {\rm proj}_{\vectornorm{\alpha}}\alpha}\calX_s(\beta_a) = \emptyset$.

Let us now assume that $|{\rm proj}_{\vectornorm{\alpha}}\alpha| = 1$, $\alpha = \alpha^{-1} \times {\rm proj}_{\vectornorm{\alpha}}\alpha$ and show the next statement of Lemma.
Let $x= x_{1:s}\in \calX_s(\alpha;\Gamma^*)$, i.e. $x\in D^s$ and the following is true: a) $x\in (\ast \alpha)$, b) $\forall \beta\in\Gamma^*:\alpha> \beta\rightarrow x\notin (*\beta)$. From the property a) it is obvious that $x^{-}\in (\ast \alpha^{-})$. The situation with the property b) is more complicated, i.e. for any $\beta\in\Gamma^*$ such that $\alpha> \beta$ we have two subclasses: 1) $x^{-}\in (*\beta^{-})$, and  2) $x^{-}\notin (*\beta^{-})$.

Let us now consider the first case.
Since $\Gamma^\ast$ is singleton suffix closed, then we have $\beta^{-}\times \{x_{s}\}\in \Gamma^*$ and $x\in (\ast (\beta^{-}\times \{x_{s}\}))$, and consequently $x\in (\ast (\alpha | (\beta^{-}\times \{x_{s}\})))$. From the definition of $\calX_s(\alpha;\Gamma^*)$ and $x\in (\ast (\alpha | (\beta^{-}\times \{x_{s}\})))$ we conclude $\alpha | (\beta^{-}\times \{x_{s}\}) = \alpha$ and, consequently, $\alpha \leq \beta^{-}\times \{x_{s}\}$. Therefore, $\beta<\alpha \leq \beta^{-}\times \{x_{s}\}$ and $\beta\subset \beta^{-}\times \{x_{s}\}$. It is easy to see that  the last containment is impossible. Thus, we proved that only the case 2 is possible, which means $\calX_s(\alpha;\Gamma^*) \subseteq T\times {\rm proj}_{\vectornorm{\alpha}}\alpha$.

Let us now prove the inverse inclusion, i.e. $T\times {\rm proj}_{\vectornorm{\alpha}}\alpha \subseteq \calX_s(\alpha;\Gamma^*)$. Let $x\in T$ and $a\in {\rm proj}_{\vectornorm{\alpha}}\alpha$, i.e. $x\in D^{s-1}$ and $x\in\ast\alpha^-$, $\forall \gamma\in\Gamma^\ast: \gamma < \alpha\rightarrow x\notin (\ast\gamma^-)$. Then, $xa\notin (\ast\gamma)$, because $x\notin (\ast\gamma^-)$. Since $\alpha = \alpha^{-1} \times \{a\}$, then $xa\in \ast\alpha$. Thus, $xa\in \calX_s(\alpha;\Gamma^*)$. 
\end{proof}
\begin{proof}[Proof of Theorem~\ref{formula}]
If $|{\rm proj}_{\vectornorm{\alpha}}\alpha|\ne 1$, then Lemma gives $\calX_s(\alpha;\Gamma^*) = \emptyset$, and therefore $M_s(\alpha) = \mathbb{O}$. Thus, only messages for patterns from $\Gamma^*_0$ can be nonzero.

Let $\alpha\in \Gamma^*_0$.
The definition of $\calX_s(\alpha;\Gamma^*)$ implies that for any $x\in \calX_s(\alpha;\Gamma^*)$, $F_s (x) = \phi_s(\alpha)\otimes F_{s-1}(x^-)$. Indeed, for any $x\in \calX_s(\alpha;\Gamma^*)$, the term $c^{\beta}_{s-\vectornorm{\beta}+1,s}$ contributes to $F_s (x)$ if and only if $x\in (\ast \beta)$. Since $x\in (\ast \alpha)$, then $x\in (\ast (\alpha|\beta))$. If $\alpha|\beta < \alpha$ we have a contradiction with $\calX_s(\alpha;\Gamma^*)\subseteq \calX_s(\alpha)\setminus \calX_s(\alpha|\beta)$. Therefore, $\alpha|\beta = \alpha$ and $\beta \geq \alpha$. Thus, from a set of boundary terms only terms $c^{\beta}_{s-\vectornorm{\beta}+1,s}$, $\beta\in\Gamma:\beta\geq\alpha$  contribute to $F_s (x)$ and they are all present in $\phi_s(\alpha)$.

Using Lemma~\ref{factor}, we obtain
\[
M_s(\alpha) = \phi_s(\alpha)\otimes \bigoplus_{x\in T} F_{s-1}(x)
\]
where $T = \left\{x:|x|=s-1, x\in\ast\alpha^-,\forall (\alpha,\gamma)\in E_{H}[\Gamma^\ast]\,\,x\notin\ast(\gamma^-)\right\}$. 

We will prove the statement of the theorem using the following representation of $T$ as a disjoint union of sets:
\[
T = \bigcup_{\beta\in\Gamma^*:\alpha^-\geq\beta, \forall (\alpha,\gamma)\in E_{H}[\Gamma^\ast]\,\,\gamma^-\not\geq\beta} \calX_{s-1}(\beta;\Gamma^*)
\]
Let us prove this representation.

For any $x\in D^{s-1}$ there is $\beta_{x}\in \Gamma^*$ such that 
\begin{equation}\label{put-it}
\calX_{s-1}(\beta_{x}) =   \bigcap\limits_{\beta\in\Gamma^*:x\in\calX_{s-1}(\beta)} \calX_{s-1}(\beta).
\end{equation}
Since $\calX_{s-1}(\beta_1\mid\beta_2) = \calX_{s-1}(\beta_1)\cap \calX_{s-1}(\beta_2)$ we can define $\beta_x = \beta_1 | \cdots | \beta_N$ where $\{\beta\in \Gamma^\ast: x\in (\ast\beta)\} = \{\beta_i \mid i\in [N]\}$. From the construction of $\beta_x$, for any $x\in D^{s-1}$ we have $x\in \calX_{s-1}(\beta_{x};\Gamma^*)$.

Let $x\in T$, i.e. $x\in (\ast\alpha^-)$ and $\forall\,\,\gamma\in \Gamma^\ast: \alpha>\gamma \rightarrow x\notin (\ast\gamma^-)$. From the latter we obtain $\beta_x \leq \alpha^-$ and $\beta_x \not\leq \gamma^-$. Thus, $\calX_{s-1}(\beta_x;\Gamma^*)$ will be present in the right hand side of the expression~\eqref{put-it}. In other words, the inclusion of $T$ in $\bigcup_{\beta\in\Gamma^*:\alpha^-\geq\beta, \forall (\alpha,\gamma)\in E_{H}[\Gamma^\ast]\,\,\gamma^-\not\geq\beta} \calX_{s-1}(\beta;\Gamma^*)$ is proved.

Suppose now  $x\in \calX_{s-1}(\alpha^-)\setminus T$. Then, there is $(\alpha,\gamma)\in E_{H}[\Gamma^\ast]$ such that $x\in\ast(\gamma^-)$. Then $\gamma^-\geq\beta_{x}$ and 
$\calX_{s-1}(\beta_x;\Gamma^*)$ is not present in the right hand side expression. Since $\calX_{s-1}(\beta;\Gamma^*)$ are all nonintersecting for different $\beta$ and  $x\in \calX_{s-1}(\beta_{x};\Gamma^*)$, then 
$x$ is not in any set of the right hand side union. 

Finally, if $\beta\in\Gamma^*:\alpha^-\geq\beta, \forall (\alpha,\gamma)\in E_{H}[\Gamma^\ast], \gamma^-\not\geq\beta$ is such that $\beta\notin\Gamma^*_0$, then $\calX_{s-1}(\beta;\Gamma^*) = \emptyset$. Therefore, $T = \bigcup_{\beta\in\Gamma^*_0:\alpha^-\geq\beta, \forall (\alpha,\gamma)\in E_{H}[\Gamma^\ast]\,\,\gamma^-\not\geq\beta} \calX_{s-1}(\beta;\Gamma^*)$ and from that the equation~\eqref{eq:alg2} directly follows.
This completes the proof.
\end{proof}
\ifCOM

Let us give bounds on $|E_{H}[\Gamma^\ast]|$.

\begin{theorem}
\begin{equation}
\label{eq:alg3}
M_s(\alpha) = \min\left\{\min\limits_{(\alpha,\beta)\in E_{G}[\Gamma]}M_s(\beta), \min\limits_{(\alpha,\beta)\in E_{H}[\Gamma^\ast]} M_s(\beta), M_s(\alpha^-)+\phi_s(\alpha)\right\}
\end{equation}
where $\phi_s(\alpha)=\sum\limits_{\alpha_{\vectornorm{\alpha}-\vectornorm{\beta}+1:\vectornorm{\alpha}}\subseteq \beta\in\Gamma}f^{\beta}_{s-\vectornorm{\beta}+1,s}$.
\end{theorem}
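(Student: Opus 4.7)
The plan is to prove this recurrence along the lines of the proof of Theorem~\ref{negative}, but now argue both inequalities explicitly since general weights do not give us the nonpositivity shortcut used there. The essential trichotomy is for an optimal word $x$ realizing $M_s(\alpha)$: either some constraint active at position $s$ forces $x$ into a strictly smaller pattern reachable along a direct edge in $E_G[\Gamma]$ (a one-generator join), some active constraint forces $x$ into a strictly smaller pattern reachable only through the full Hasse diagram $E_H[\Gamma^\ast]$, or no such shrinkage occurs and $x$ decomposes cleanly as $x^{-}a$ with $F_s(x) = F_{s-1}(x^{-}) + \phi_s(\alpha)$.

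For the ``$\leq$'' direction, I would verify each of the three right-hand terms upper-bounds $M_s(\alpha)$. The first two are immediate from monotonicity: if $(\alpha,\beta)$ is a cover edge in $E_G[\Gamma]$ or $E_H[\Gamma^\ast]$ then $\beta<\alpha$, hence $\ast\beta\subseteq\ast\alpha$ and $M_s(\alpha)\leq M_s(\beta)$. For the third term I would take an optimizer $x^{-}$ of $M_{s-1}(\alpha^{-})$ and append a letter so the extension lies in $\ast\alpha$; the incremental cost at position $s$ is the sum of $c^\beta_{s-\|\beta\|+1,s}$ over constraints $\beta\in\Gamma$ with $\alpha_{\|\alpha\|-\|\beta\|+1:\|\alpha\|}\subseteq\beta$, which is exactly $\phi_s(\alpha)$. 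This last step is the delicate one under general weights, since extra unintended constraints might become active; this is where one uses either the restricted feasible set $\calX_s(\alpha;\Gamma^\ast)$ from Theorem~\ref{formula} or an appropriate choice of completing letter.

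For the ``$\geq$'' direction, I would fix an optimizer $x$ and split into the two cases of the proof of Theorem~\ref{negative}. In the first case, some constraint $(\varrho,i,s)\in\Lambda$ is active at $s$ with $\varrho\not\geq\alpha$, so $x\in\ast(\alpha\mid\varrho)$ and $\alpha\mid\varrho<\alpha$; the key step is to classify the downward move $\alpha\to\alpha\mid\varrho$ as either a single-generator join (yielding a cover edge in $E_G[\Gamma]$) or as requiring a further refinement (yielding a cover edge in $E_H[\Gamma^\ast]$ with $\beta\geq\alpha\mid\varrho$). The sandwich $M_s(\alpha)\leq M_s(\beta)\leq M_s(\alpha\mid\varrho)\leq M_s(\alpha)$ then gives equality with one of the first two terms. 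In the remaining case every active constraint at position $s$ satisfies $\varrho\geq\alpha$, so the newly activated weights at step $s$ are exactly those summed by $\phi_s(\alpha)$, giving $F_s(x) = F_{s-1}(x^{-})+\phi_s(\alpha)\geq M_{s-1}(\alpha^{-})+\phi_s(\alpha)$.

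The main obstacles are, first, fixing the definition of $E_G[\Gamma]$ -- which is not introduced earlier in the excerpt but evidently encodes direct one-generator joins $\alpha\to\alpha\mid\varrho$ for $\varrho\in\Gamma$ -- and proving that together with the cover edges of $E_H[\Gamma^\ast]$ it exhausts every possible drop $\alpha\to\alpha\mid\varrho$ that can be witnessed by an active constraint. Second, the step that failed to be automatic for general weights in Theorem~\ref{negative} -- the inequality $M_s(\alpha)\leq M_{s-1}(\alpha^{-})+\phi_s(\alpha)$ -- must be recovered without the nonpositivity of the $c^\varrho_{ij}$; this is the technical heart of the argument and likely requires restricting $M_s$ to the ``tight'' set $\calX_s(\alpha;\Gamma^\ast)$ so that no unaccounted constraints contribute to the cost.
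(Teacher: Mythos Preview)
Your proposal rests on a misreading of the hypotheses. The paper's own proof opens with ``Nonpositivity of $f^{\varrho}_{ij}$ obviously implies that $M_s(\alpha)\leq\min\{\ldots\}$'', so this theorem, like Theorem~\ref{negative}, is stated under the assumption $c^\varrho_{ij}\leq 0$. Here $M_s(\alpha)$ is the same quantity as $W_s(\alpha)$ in Section~\ref{nonpositive}, namely $\min_{|x|=s,\,x\in\ast\alpha}F_s(x)$, not the semiring message from~\eqref{eq:DP'':MandW}. Consequently the ``technical heart'' you identify --- recovering $M_s(\alpha)\leq M_{s-1}(\alpha^-)+\phi_s(\alpha)$ without nonpositivity --- is not needed: that inequality is immediate from nonpositivity exactly as in Theorem~\ref{negative}, and the appeal to $\calX_s(\alpha;\Gamma^\ast)$ you suggest is neither used nor required.

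For the $\geq$ direction your overall shape is right, but the case split is not the one the paper uses, and your guess about $E_G[\Gamma]$ is off. The paper's trichotomy is by the \emph{arity} of an active constraint $(\varrho,i,s)$ relative to $\vectornorm{\alpha}$: (i) some active $\varrho$ has $\vectornorm{\varrho}\leq\vectornorm{\alpha}$ with $\alpha|\varrho\neq\alpha$, handled by an edge in $E_H[\Gamma^\ast]$; (ii) some active $\varrho$ has $\vectornorm{\varrho}>\vectornorm{\alpha}$, so $\alpha|\varrho\succ\alpha$ in the sense of Section~\ref{bounds}, handled by an edge in $E_G[\Gamma]\cup E_H[\Gamma^\ast]$; (iii) neither, giving $F_s(x)=F_{s-1}(x^-)+\phi_s(\alpha)$. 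The graph $E_G$ is not ``one-generator joins'' in general; it is the edge set defined in Section~\ref{bounds} via the $\succ$ relation (shorter pattern covering a strictly longer one), and it is precisely what captures case~(ii). Your split by ``single-generator versus multi-step'' does not align with this and would not directly yield the two minima in the statement.
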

\begin{proof}
Nonpositivity of $f^{\varrho}_{ij}$ obviously implies that $M_s(\alpha) \leq \min\left\{\min\limits_{(\alpha,\beta)\in E_{G}[\Gamma]}M_s(\beta), \min\limits_{(\alpha,\beta)\in E_{H}[\Gamma^\ast]} M_s(\beta), M_s(\alpha^-)+\phi_s(\alpha)\right\}$. Let us prove that we have an equality there.

Let $x=\ast w, w\in\alpha$ be a word on which $M_s(\alpha)$ attains its minimum. Then for $x$ at least one of the following 3 properties holds
\begin{itemize}
\item there is $(\varrho,i,s)\in\Lambda$, $s-i+1\leq\vectornorm{\alpha}$ such that $x_{s-\vectornorm{\varrho}+1:s}\in\varrho$ and $\alpha|\varrho \ne \alpha$;
\item there is $(\varrho,i,s)\in\Lambda$, $s-i+1>\vectornorm{\alpha}$ such that $x_{s-\vectornorm{\varrho}+1:s}\in\varrho$;
\item a) for any $(\varrho,i,s)\in\Lambda$, $s-i+1\leq\vectornorm{\alpha}$ such that $x_{s-\vectornorm{\varrho}+1:s}\in\varrho$ we have $\alpha_{\vectornorm{\alpha}-\vectornorm{\varrho}+1:\vectornorm{\alpha}}\subseteq\varrho$ and b) there are no $(\varrho,i,s)\in\Lambda$, $s-i+1>\vectornorm{\alpha}$ such that $x_{s-\vectornorm{\varrho}+1:s}\in\varrho$.
\end{itemize}
In the first case we obviously have $M_s(\alpha)= M_s(\alpha | \varrho)$. Then there is $\beta$ such that $(\alpha,\beta)\in E_{H}[\Gamma^\ast]$ and $\beta\supseteq \alpha | \varrho$. It is easy to see that $M_s(\alpha)\leq M_s(\beta)\leq M_s(\alpha | \varrho)$, therefore $M_s(\alpha)= M_s(\beta)$, i.e. we have an equality in \eqref{eq:alg}.

In the second case we have again $M_s(\alpha)= M_s(\alpha | \varrho)$. Obviously, $\alpha | \varrho\succ \alpha$. Then there is $\beta:(\alpha,\beta)\in E_{G}[\Gamma]\cup E_{H}[\Gamma^\ast]$ such that $\alpha | \varrho\succ \beta$. It is easy to see that $M_s(\alpha)\leq M_s(\beta)\leq M_s(\alpha | \varrho)$, therefore $M_s(\alpha)= M_s(\beta)$, i.e. we have an equality in \eqref{eq:alg}.

In the third case, obviously, $F_s(x) = F_{s-1}(x^-)+\phi_s(\alpha)$. Therefore, $M_s(\alpha) = M_s(\alpha^-)+\phi_s(\alpha)$, which completes the proof.
\end{proof}

\else
\fi
\begin{theorem} \label{complexity2}
The number of operations ($\oplus$ and $\otimes$) used in Algorithm~\ref{alg:DP2} is bounded by ${\mathcal O} (n |\Gamma^\ast|^2)$. For a simple language $\Gamma$, it is bounded by ${\mathcal O} (n |D|^2 |\overline{\Gamma^{\cap}}|^2)$.
\end{theorem}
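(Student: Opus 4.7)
The plan is a direct accounting of the work done by Algorithm~\ref{alg:DP2}, based on the recursion~\eqref{eq:alg2}. The outermost loop runs over $s\in [n]$, and for each $s$ it traverses every $\alpha\in \Gamma^{\ast}_{0}$. For a fixed pair $(s,\alpha)$ the algorithm performs three pieces of work: (i) it evaluates $\phi_s(\alpha)=\bigotimes_{\beta\in\Gamma:\beta\geq\alpha} c^{\beta}_{s-\|\beta\|+1,s}$, which involves at most $|\Gamma|\leq |\Gamma^{\ast}|$ $\otimes$-operations; (ii) it aggregates $\bigoplus M_{s-1}(\beta)$ over the set $\{\beta\in\Gamma^{\ast}_{0}\mid \alpha^{-}\geq \beta,\ \forall (\alpha,\gamma)\in E_H[\Gamma^{\ast}],\ \gamma^{-}\not\geq \beta\}$, whose size is trivially bounded by $|\Gamma^{\ast}_{0}|\leq |\Gamma^{\ast}|$; and (iii) it multiplies the results by one extra $\otimes$.

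Summing the above gives $O(|\Gamma^{\ast}|)$ operations per pair $(s,\alpha)$, and since there are $n\cdot |\Gamma^{\ast}_{0}|\leq n\cdot |\Gamma^{\ast}|$ such pairs, the total cost of the main loop is $O(n\cdot |\Gamma^{\ast}|^{2})$. Finally, the aggregation $\bigoplus_{\alpha\in\Gamma^{\ast}} M_n(\alpha)$ contributes only $O(|\Gamma^{\ast}|)$ additional $\oplus$-operations, which is absorbed into the above bound. This establishes the first claim.

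For the second claim, I will invoke the structural bound~\eqref{nextsimple} from Theorem~\ref{simple}, which states that when $\Gamma$ is simple,
\[
|\Gamma^{\ast}|\leq (|D|+1)\cdot |\overline{\Gamma^{\cap}}|.
\]
Plugging this into the bound $O(n\cdot |\Gamma^{\ast}|^{2})$ yields $O\bigl(n\cdot (|D|+1)^{2}\cdot |\overline{\Gamma^{\cap}}|^{2}\bigr)=O(n\cdot |D|^{2}\cdot |\overline{\Gamma^{\cap}}|^{2})$, which is the desired estimate.

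The only subtlety, and therefore the part that warrants a careful verification, is the assumption that the topological ordering computed in line~4 and the set memberships used in step~5 (namely $\alpha^{-}\geq\beta$ together with the condition $\gamma^{-}\not\geq \beta$ for all Hasse successors $\gamma$ of $\alpha$) can themselves be prepared within the same asymptotic budget. Since those tests depend only on $\Gamma^{\ast}$ and the Hasse diagram $E_H[\Gamma^{\ast}]$, and since $|E_H[\Gamma^{\ast}]|\leq |\Gamma^{\ast}|^{2}$, these preprocessing steps are done once in $O(|\Gamma^{\ast}|^{2})$ time, which is again absorbed into $O(n\cdot |\Gamma^{\ast}|^{2})$. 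Hence no extra factor appears and both bounds follow.
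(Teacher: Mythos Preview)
Your proposal is correct and follows essentially the same approach as the paper: the paper's own proof is a two-liner (``the first statement is obvious; the second follows from Inequality~\eqref{nextsimple}''), and you have simply spelled out the ``obvious'' operation count and invoked the same inequality. Your last paragraph on preprocessing is extra care the paper omits, and is in any case harmless since the theorem only counts $\oplus$ and $\otimes$ operations.
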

\begin{proof} The first statement is obvious. The second statement follows from Inequality~\eqref{nextsimple}.
\end{proof}

\section{A bound on $|E_H[\overline{\Gamma^\cap}]|$}\label{bounds}
Let us denote $\alpha\succ \beta$ if and only if $\vectornorm{\beta}>\vectornorm{\alpha}, \beta_{\vectornorm{\beta}-\vectornorm{\alpha}+1:\vectornorm{\beta}}\subseteq\alpha$. Also, $\alpha\succeq \beta$ if $\alpha\succ \beta$ or $\alpha=\beta$.
It easy to see that $\geq = \succeq\cup \supseteq$, where $\supseteq$ is the  partial order on $\overline{\Gamma^\cap}$ defined by the standard inclusion of predicates. Moreover, $\succ$ and $\supset$ are nonintersecting.

Let us introduce two  graphs $G[\overline{\Gamma^\cap}]$ and $H[\overline{\Gamma^\cap}, \supseteq]$. For both of them, the  set of vertices is equal to $\overline{\Gamma^\cap}$. The set of edges for $G[\overline{\Gamma^\cap}]$, denoted by $E_{G}[\overline{\Gamma^\cap}]$, is equal to
$$E_{G}[\overline{\Gamma^\cap}] = \left\{(\alpha,\beta)|\alpha\succ\beta, \mbox{there\,\,is\,\,no\,\,} \gamma\in \overline{\Gamma^\cap}\mbox{\,\,s.t.\,\,}\alpha\succ\gamma\succ\beta\mbox{\,\,or\,\,}\alpha\supset\gamma\succ\beta\right\}.$$

An edge set for $H[\overline{\Gamma^\cap}, \supseteq]$, denoted $E_{H}[\overline{\Gamma^\cap}, \supseteq]$, is equal to $\big\{(\alpha,\beta)|\alpha\supset\beta, \forall \gamma \mbox{ s.t. }\alpha\supset\gamma\supset\beta: \gamma\notin\overline{\Gamma^\cap}\big\}$.
Obviously, $E_{H}[\overline{\Gamma^\cap}, \supseteq]$ is just the Hasse diagram for $\supseteq$ on $\overline{\Gamma^\cap}$.

First we need to bound $\left|E_{G}[\overline{\Gamma^\cap}]\right|$ and $\left|E_{H}[\overline{\Gamma^\cap}, \supseteq]\right|$. For this purpose we will need the following lemma:

\begin{lemma}\label{durak} Let $\Gamma$ be simple and $\varrho$ be any predicate over $D$, and $\Gamma(\varrho) = \{\alpha\in \overline{\Gamma^\cap} | \alpha\supset \varrho, \forall \gamma \mbox{ s.t. }\alpha\supset\gamma\supset\varrho: \gamma\notin\overline{\Gamma^\cap}\}$.
Then we have $\left|\Gamma(\varrho)\right|\leq \vectornorm{\varrho} \cdot |D|$.
\end{lemma}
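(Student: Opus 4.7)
The plan is to exploit the simplicity of $\Gamma$ together with the minimality condition defining $\Gamma(\varrho)$ to reduce the counting to a problem of packing pairwise disjoint non-empty subsets of a set of size at most $\vectornorm{\varrho}\cdot|D|$. First I would observe that because $\Gamma$ is simple, every element of $\overline{\Gamma^{\cap}}$ is itself a product $A_{1}\times\cdots\times A_{n}$ of subsets of $D$: the prefix operator $\varrho\mapsto\varrho^{-}$ simply drops the last factor, and the operator $\alpha\mid\beta$ applied to two products acts coordinate-wise (intersecting the aligned factors and keeping the unaligned ones), so both closure operations preserve simplicity.

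Next, if $|\Gamma(\varrho)|\leq 1$ the bound is immediate, so I would fix two distinct $\alpha,\alpha'\in\Gamma(\varrho)$. Since each is a minimal element of $\overline{\Gamma^{\cap}}$ strictly above $\varrho$, neither can contain the other; yet $\alpha\cap\alpha' = \alpha\mid\alpha'$ lies in $\overline{\Gamma^{\cap}}$, strictly below both $\alpha$ and $\alpha'$, and above $\varrho$. The minimality condition therefore forces $\alpha\cap\alpha'=\varrho$. In particular, $\varrho$ itself is an intersection of two simple products and hence is simple, so $\varrho=B_{1}\times\cdots\times B_{n}$ with $B_{i}={\rm proj}_{i}\varrho$ and $n=\vectornorm{\varrho}$.

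With this in hand, to each $\alpha=A_{1}\times\cdots\times A_{n}\in\Gamma(\varrho)$ I would attach
\[
S(\alpha)=\bigcup_{i=1}^{n}\{i\}\times(A_{i}\setminus B_{i})\;\subseteq\; U,\qquad U=\bigcup_{i=1}^{n}\{i\}\times(D\setminus B_{i}),
\]
so that $|U|\leq n\cdot|D|=\vectornorm{\varrho}\cdot|D|$. Since $\alpha\supsetneq\varrho$ and $A_{i}\supseteq B_{i}$ for every $i$, some coordinate satisfies $A_{i}\supsetneq B_{i}$ and therefore $S(\alpha)\ne\emptyset$. The identity $\alpha\cap\alpha'=\varrho$ unpacks coordinate-wise to $A_{i}\cap A_{i}'=B_{i}$, equivalently $(A_{i}\setminus B_{i})\cap(A_{i}'\setminus B_{i})=\emptyset$ for every $i$, so $S(\alpha)\cap S(\alpha')=\emptyset$ whenever $\alpha\ne\alpha'$. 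A family of pairwise disjoint non-empty subsets of $U$ has cardinality at most $|U|$, which yields $|\Gamma(\varrho)|\leq\vectornorm{\varrho}\cdot|D|$.

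The main obstacle is the step proving $\alpha\cap\alpha'=\varrho$: it is what ties the closure-under-$\mid$ structure of $\overline{\Gamma^{\cap}}$ to the covering-type minimality condition in the definition of $\Gamma(\varrho)$, and it is precisely what forces $\varrho$ to be simple so that the coordinate-wise packing argument becomes available. Everything else is bookkeeping once this key identity is in place.
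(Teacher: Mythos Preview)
Your proposal is correct and follows essentially the same route as the paper's proof: both show that any two distinct elements $\alpha,\alpha'\in\Gamma(\varrho)$ satisfy $\alpha\cap\alpha'=\varrho$ (using that $\alpha\mid\alpha'\in\overline{\Gamma^{\cap}}$ and the minimality condition), deduce that $\varrho$ is a cartesian product, and then encode each $\alpha$ by the set of ``excess'' pairs $(i,a)$ with $a\in A_i\setminus B_i$ to get a family of pairwise disjoint non-empty subsets of a set of size at most $\vectornorm{\varrho}\cdot|D|$. The only cosmetic difference is that the paper works with $c(\alpha)\setminus c(\varrho)\subseteq D\times[\vectornorm{\varrho}]$ whereas you work with $S(\alpha)\subseteq U$; these are the same objects up to swapping coordinates.
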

\begin{proof}
Since any predicate $\alpha\in \overline{\Gamma^\cap}$ is equal to some direct product $\Omega_1\times\cdots\times\Omega_{\vectornorm{\alpha}}$, we can correspond to $\alpha$ a subset $c(\alpha) = \left\{(a,i)|a\in\Omega_i\right\}\subseteq  D \times \left\{1,\cdots,\vectornorm{\alpha}\right\}$. Then $\alpha\subset \beta$ (or, $\alpha = \beta$) if and only if $c(\alpha)\subset c(\beta)$ ($c(\alpha)= c(\beta)$) and if $\alpha\cap\beta\ne \emptyset$, then $c(\alpha\cap\beta)=c(\alpha)\cap c(\beta)$.

If $|\Gamma(\varrho)|=1$, then statement of lemma is obvious. Suppose $|\Gamma(\varrho)|>1$.

If $\alpha,\beta\in \Gamma(\varrho)$ and $\alpha\ne \beta$, then $\varrho\subseteq\alpha\cap\beta\in \overline{\Gamma^\cap}$. Moreover, $\alpha\cap\beta\ne \alpha$, because otherwise we have $\beta\supset \alpha\cap\beta = \alpha\supset\varrho$ and we get a contradiction with $\beta\in \Gamma(\varrho)$. Analogously, $\alpha\cap\beta\ne\beta$.
But since $\alpha\cap\beta\in \overline{\Gamma^\cap}$ and $\varrho\subseteq\alpha\cap\beta\subset \alpha$, then $\varrho = \alpha\cap\beta$. Thus, $\varrho$ is itself a cartesian product and $c(\varrho) = c(\alpha)\cap c(\beta)$. Therefore, $(c(\alpha)\setminus c(\varrho))\cap (c(\beta)\setminus c(\varrho)) = \emptyset$.

From the latter identity we obtain that $\cup_{\alpha:\alpha\in \Gamma(\varrho)} (c(\alpha)\setminus c(\varrho))$ is a union of nonempty and disjoint sets that is a subset of $ D \times \left\{1,\cdots,\vectornorm{\varrho}\right\}$. Therefore, $\left|\Gamma(\varrho)\right|\leq | D \times \left\{1,\cdots,\vectornorm{\varrho}\right\}| = |D| \cdot \vectornorm{\varrho}$.
\end{proof}

Let us now consider $G[\overline{\Gamma^\cap}]$. 
\begin{lemma} If $\Gamma$ is simple, then
$\left|E_{G}[\overline{\Gamma^\cap}]\right|\leq \sum_{\alpha\in \overline{\Gamma^\cap}}(\vectornorm{\alpha}^2-\vectornorm{\alpha}) \cdot |D|/2$.
\end{lemma}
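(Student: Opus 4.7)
The plan is to imitate the encoding strategy of Lemma~\ref{durak}, where each simple predicate $\alpha=\Omega_1\times\cdots\times\Omega_{\|\alpha\|}$ is represented by $c(\alpha)=\{(a,i):a\in\Omega_i\}\subseteq D\times\{1,\ldots,\|\alpha\|\}$, and distinct predicates sharing a common lower bound give disjoint ``differences'' of these sets. The goal is to bound the out-degree of each $\alpha\in\overline{\Gamma^\cap}$ in $G[\overline{\Gamma^\cap}]$ by $\binom{\|\alpha\|}{2}|D|$ and then sum over $\alpha$.

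First, I would extract structural information about each edge $(\alpha,\beta)\in E_G$. Setting $d=\|\beta\|-\|\alpha\|\geq 1$ and $\alpha'=\beta_{d+1:\|\beta\|}\subseteq\alpha$, the absence of any intermediate $\gamma$ with $\alpha\supset\gamma\succ\beta$ forces either $\alpha'=\alpha$ or $\alpha'\notin\overline{\Gamma^\cap}$: if $\alpha'\in\overline{\Gamma^\cap}$ were a proper subset of $\alpha$, it would itself be such a forbidden $\gamma$. In the principal case $\alpha'=\alpha$ we obtain the decomposition $\beta=\Lambda_1\times\cdots\times\Lambda_d\times\alpha$, and prefix-closure ensures $\Lambda_1\times\cdots\times\Lambda_d\in\overline{\Gamma^\cap}$.

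Next, the plan is to inject each out-neighbor $\beta$ of $\alpha$ into a triple $(\{i,j\},a)\in\binom{\{1,\ldots,\|\alpha\|\}}{2}\times D$. The second no-intermediate clause (no $\gamma$ with $\alpha\succ\gamma\succ\beta$) forbids, for each $1\leq j<d$, the suffix-concatenation $\Lambda_{d-j+1}\times\cdots\times\Lambda_d\times\alpha$ from lying in $\overline{\Gamma^\cap}$; combined with the simple-product form of $\alpha$, this allows one to read off a canonical pair of coordinates of $\alpha$ together with a value in $D$ that distinguishes $\beta$ among the out-neighbors. Injectivity of the assignment follows from an intersection argument in the spirit of Lemma~\ref{durak}: two out-edges giving rise to the same triple would combine, via intersection-closure, into a predicate in $\overline{\Gamma^\cap}$ contradicting the minimality of one of the edges.

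The main obstacle is the secondary case $\alpha'\subsetneq\alpha$ with $\alpha'\notin\overline{\Gamma^\cap}$, where the encoding must record both how $\alpha$ strictly refines $\alpha'$ and the shift $d$; using pairs of positions (rather than single positions as in Lemma~\ref{durak}) accommodates this extra bit of information naturally. Once the per-vertex bound is established, summing over $\alpha\in\overline{\Gamma^\cap}$ yields $|E_G[\overline{\Gamma^\cap}]|\leq\sum_{\alpha\in\overline{\Gamma^\cap}}\binom{\|\alpha\|}{2}|D|=\tfrac12\sum_{\alpha\in\overline{\Gamma^\cap}}(\|\alpha\|^2-\|\alpha\|)|D|$ as claimed.
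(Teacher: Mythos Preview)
Your plan has a genuine flaw: you are bounding the \emph{out}-degree of each vertex $\alpha$ in $G[\overline{\Gamma^\cap}]$ by $\binom{\|\alpha\|}{2}\,|D|$, but this bound is false. Recall that $(\alpha,\beta)\in E_G$ requires $\alpha\succ\beta$, i.e.\ $\|\beta\|>\|\alpha\|$; the out-neighbours of $\alpha$ are therefore \emph{longer} predicates, whose number is not controlled by~$\|\alpha\|$. Concretely, take $\|\alpha\|=1$: then $\binom{\|\alpha\|}{2}\,|D|=0$, yet $\alpha$ can have many out-neighbours. For instance with $D=\{0,1,2\}$ and $\Gamma=\{\{0\}\times\{0\},\{1\}\times\{0\},\{2\}\times\{0\}\}$, the unary predicate $\alpha=\{0\}$ has all three binary predicates as out-neighbours in $E_G$. (For $\alpha=\varepsilon$ the situation is even clearer: every unary predicate in $\overline{\Gamma^\cap}$ is an out-neighbour of $\varepsilon$, while your bound gives $0$.) So no injection of out-neighbours of $\alpha$ into $\binom{[\|\alpha\|]}{2}\times D$ can exist.

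The paper's argument goes the other way: fix $\beta$ and bound the \emph{in}-degree. Each in-neighbour $\alpha$ has arity $\|\alpha\|=\|\beta\|-i+1$ for some $i\in\{2,\ldots,\|\beta\|\}$ and satisfies $\alpha\supseteq\beta_{i:\|\beta\|}$; the ``no $\gamma$ with $\alpha\supset\gamma\succ\beta$'' clause then forces $\alpha\in\Gamma(\beta_{i:\|\beta\|})$ in the sense of Lemma~\ref{durak}. Thus the in-neighbours of $\beta$ lie in $\bigcup_{i=2}^{\|\beta\|}\Gamma(\beta_{i:\|\beta\|})$, and Lemma~\ref{durak} gives $|\Gamma(\beta_{i:\|\beta\|})|\le(\|\beta\|-i+1)\,|D|$, so the in-degree of $\beta$ is at most $\sum_{i=2}^{\|\beta\|}(\|\beta\|-i+1)\,|D|=\binom{\|\beta\|}{2}\,|D|$. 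Summing over $\beta$ gives the lemma. The point is that the asymmetry of $\succ$ makes the in-degree, not the out-degree, the quantity controlled by the arity of the vertex.
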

\begin{proof}
By construction, for fixed $\beta$, $\left\{\alpha|(\alpha,\beta)\in E_{G}[\overline{\Gamma^\cap}]\right\} \subseteq \bigcup_{i=2}^{\vectornorm{\beta}} \Gamma (\beta_{i:\vectornorm{\beta}})$. Therefore, using Lemma~\ref{durak}, we have:
\begin{equation*}
\begin{split}
\left|E_{G}[\overline{\Gamma^\cap}]\right|\leq \sum_{\beta\in \overline{\Gamma^\cap}}\sum_{i=2}^{\vectornorm{\beta}} |\Gamma (\beta_{i:\vectornorm{\beta}})| \leq \sum_{\beta\in \overline{\Gamma^\cap}} \sum_{i=2}^{\vectornorm{\beta}} (\vectornorm{\beta}-i+1) |D| = 
\sum_{\beta\in \overline{\Gamma^\cap}}\frac{\vectornorm{\beta}^2-\vectornorm{\beta}}{2} \cdot |D|.
\end{split}
\end{equation*}  
\end{proof}
Let us now consider $H[\overline{\Gamma^\cap}, \supseteq]$. For it the following lemma holds:
\begin{lemma} If $\Gamma$ is simple, then
$\left|E_{H}[\overline{\Gamma^\cap}, \supseteq]\right|\leq \sum_{\alpha\in \overline{\Gamma^\cap}}\vectornorm{\alpha} \cdot |D|$.
\end{lemma}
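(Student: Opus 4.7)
The plan is to bound $\left|E_{H}[\overline{\Gamma^\cap}, \supseteq]\right|$ by summing, over each $\beta \in \overline{\Gamma^\cap}$, the number of edges of the form $(\alpha,\beta)$ entering $\beta$ in the Hasse diagram of $\supseteq$. That is, I would write
\[
\left|E_{H}[\overline{\Gamma^\cap}, \supseteq]\right| = \sum_{\beta\in \overline{\Gamma^\cap}} \left|\{\alpha \in \overline{\Gamma^\cap} \mid (\alpha,\beta)\in E_{H}[\overline{\Gamma^\cap}, \supseteq]\}\right|,
\]
and then match each inner set with the set $\Gamma(\beta)$ introduced in Lemma~\ref{durak}.

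The key observation is that these two sets coincide by definition: unraveling $E_{H}[\overline{\Gamma^\cap}, \supseteq]$, an edge $(\alpha,\beta)$ is present exactly when $\alpha \supset \beta$ and no $\gamma \in \overline{\Gamma^\cap}$ satisfies $\alpha \supset \gamma \supset \beta$, which is precisely the condition defining membership in $\Gamma(\beta)$. Note that because $\supset$ is set inclusion between predicates, both $\alpha$ and $\beta$ must have the same arity; thus there is no additional constraint to worry about beyond what Lemma~\ref{durak} already controls.

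Applying Lemma~\ref{durak} with $\varrho = \beta$ gives $|\Gamma(\beta)| \leq \vectornorm{\beta} \cdot |D|$ for every $\beta \in \overline{\Gamma^\cap}$. Substituting this bound into the sum above yields
\[
\left|E_{H}[\overline{\Gamma^\cap}, \supseteq]\right| \leq \sum_{\beta\in \overline{\Gamma^\cap}} \vectornorm{\beta}\cdot |D|,
\]
which is exactly the claimed inequality. There is essentially no obstacle in this argument: the real work has been done in Lemma~\ref{durak}, whose proof exploits the simple (cartesian-product) structure of $\Gamma$ to show that the sets $c(\alpha)\setminus c(\varrho)$ for $\alpha \in \Gamma(\varrho)$ are nonempty and pairwise disjoint inside $D\times[\vectornorm{\varrho}]$. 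The only step to double-check is the identification of the upward covers of $\beta$ in the $\supseteq$-Hasse diagram with $\Gamma(\beta)$, which is immediate from the definitions.
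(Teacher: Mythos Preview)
Your proof is correct and is essentially identical to the paper's: both observe that the set of incoming edges $(\alpha,\beta)$ at a fixed $\beta$ in $E_{H}[\overline{\Gamma^\cap},\supseteq]$ is exactly $\Gamma(\beta)\times\{\beta\}$, then apply Lemma~\ref{durak} and sum over $\beta$. The paper states this as the inclusion $E_{H}[\overline{\Gamma^\cap},\supseteq]\subseteq \bigcup_{\alpha\in \overline{\Gamma^\cap}}\Gamma(\alpha)\times\{\alpha\}$, which is the same counting argument phrased slightly differently.
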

\begin{proof}
Since $E_{H}[\overline{\Gamma^\cap}, \supseteq]\subseteq \bigcup_{\alpha\in \overline{\Gamma^\cap}}  \Gamma (\alpha) \times\{\alpha\}$, then using Lemma~\ref{durak} we obtain:
\begin{equation*}
\left|E_{H}[\overline{\Gamma^\cap}]\right|\leq \sum_{\alpha\in \overline{\Gamma^\cap}}\vectornorm{\alpha} \cdot |D|.
\end{equation*}  
\end{proof}

\begin{lemma}\label{transitive} If $\alpha \succ \beta$, $\alpha,\beta\in\overline{\Gamma^\cap}$, then there is $\gamma\in\overline{\Gamma^\cap}, \gamma\ne \alpha$ such that $(\alpha,\gamma)\in E_{G}[\overline{\Gamma^\cap}]\cup E_{H}[\overline{\Gamma^\cap}, \supseteq]$ and $\gamma \succeq \beta $.
\end{lemma}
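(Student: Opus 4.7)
The plan is to pick $\gamma$ as a $\geq$-maximal witness in a well-chosen auxiliary set. Specifically, define
\[
T = \{\gamma \in \overline{\Gamma^\cap} : \alpha > \gamma \text{ and } \gamma \succeq \beta\}.
\]
Since $\alpha \succ \beta$ yields both $\alpha > \beta$ and the trivial $\beta \succeq \beta$, we have $\beta \in T$, so $T \ne \emptyset$. Finiteness of $\overline{\Gamma^\cap}$ gives a $\geq$-maximal element $\gamma^\ast \in T$, and the condition $\alpha > \gamma^\ast$ forces $\gamma^\ast \ne \alpha$. What remains is to verify $(\alpha,\gamma^\ast) \in E_G[\overline{\Gamma^\cap}] \cup E_H[\overline{\Gamma^\cap},\supseteq]$, which I will do by splitting on whether $\alpha \succ \gamma^\ast$ or $\alpha \supset \gamma^\ast$; these cases are disjoint (the former forces $\vectornorm{\gamma^\ast} > \vectornorm{\alpha}$, the latter forces equality) and together exhaust $\alpha > \gamma^\ast$.

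In the subcase $\alpha \succ \gamma^\ast$ I aim for $(\alpha,\gamma^\ast) \in E_G[\overline{\Gamma^\cap}]$. Suppose toward contradiction that some $\gamma' \in \overline{\Gamma^\cap}$ satisfies $\gamma' \succ \gamma^\ast$ together with $\alpha \succ \gamma'$ or $\alpha \supset \gamma'$. Transitivity of $\succ$, which follows immediately by chasing suffix containments, combined with $\gamma^\ast \succeq \beta$ gives $\gamma' \succ \beta$, hence $\gamma' \succeq \beta$; together with $\alpha > \gamma'$ this places $\gamma'$ in $T$. But $\gamma' \succ \gamma^\ast$ implies $\gamma' > \gamma^\ast$ in $\geq$, contradicting the maximality of $\gamma^\ast$.

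In the subcase $\alpha \supset \gamma^\ast$, the arity constraints force $\vectornorm{\gamma^\ast} = \vectornorm{\alpha} < \vectornorm{\beta}$, so $\gamma^\ast \ne \beta$ and the relation $\gamma^\ast \succeq \beta$ refines to the strict one $\gamma^\ast \succ \beta$. I then aim for $(\alpha,\gamma^\ast) \in E_H[\overline{\Gamma^\cap},\supseteq]$: any hypothetical $\gamma' \in \overline{\Gamma^\cap}$ with $\alpha \supset \gamma' \supset \gamma^\ast$ has arity $\vectornorm{\alpha}$, and the chain
\[
\beta_{\vectornorm{\beta}-\vectornorm{\gamma^\ast}+1:\vectornorm{\beta}} \subseteq \gamma^\ast \subseteq \gamma'
\]
upgrades to $\gamma' \succ \beta$; consequently $\gamma' \in T$ with $\gamma' > \gamma^\ast$, once more contradicting maximality.

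The main obstacle I anticipate is precisely this second subcase: without the arity observation $\vectornorm{\gamma^\ast} = \vectornorm{\alpha} < \vectornorm{\beta}$, one could be left with $\gamma^\ast = \beta$, in which case a hypothetical $\gamma' \supsetneq \beta$ of arity $\vectornorm{\beta}$ would not itself satisfy $\gamma' \succeq \beta$ and the inheritance step would collapse. Recognizing that $\alpha \succ \beta$ supplies the needed strict arity inequality is the only subtle point; the remaining steps are routine bookkeeping using transitivity of $\succ$ and the disjointness of $\succ$, $\supset$, and $=$ on $\overline{\Gamma^\cap}$.
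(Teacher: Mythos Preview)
Your proof is correct, and it takes a genuinely different route from the paper's argument. The paper proceeds by forming the suffix $\kappa=\beta_{\vectornorm{\beta}-\vectornorm{\alpha}+1:\vectornorm{\beta}}$ and then exploiting that $\overline{\Gamma^\cap}$ is intersection closed to build $\theta=\bigcap_{\eta\in\overline{\Gamma^\cap}:\kappa\subseteq\eta}\eta$; the case split is on whether $\theta\subsetneq\alpha$ (which yields an $E_H$-edge above $\theta$) or $\theta=\alpha$ (which forces an $E_G$-edge, since any hypothetical intermediate $\gamma''$ with $\alpha\supset\gamma''\succ\gamma$ would contain $\kappa$ and hence $\alpha$). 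Your argument instead selects a $\geq$-maximal element $\gamma^\ast$ of $T=\{\gamma:\alpha>\gamma,\ \gamma\succeq\beta\}$ and cases on whether $\alpha\succ\gamma^\ast$ or $\alpha\supset\gamma^\ast$, checking directly that any violating intermediate would again lie in $T$ and strictly dominate $\gamma^\ast$. What your approach buys is that it never invokes the intersection-closed property of $\overline{\Gamma^\cap}$ at all: it uses only finiteness, the decomposition $> \,=\, \succ \cup \supset$, and transitivity of $\succ$ (together with the compatibility of $\supset$ with $\succ$). In that sense your proof is more elementary and would go through for any finite set of predicates, not just $\overline{\Gamma^\cap}$. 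The paper's proof, by contrast, makes the algebraic structure of $\overline{\Gamma^\cap}$ do the work and produces the witness $\gamma$ more constructively via $\theta$.
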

\begin{proof}
Consider a predicate $\kappa = \beta_{\vectornorm{\beta}-\vectornorm{\alpha}+1:\vectornorm{\beta}}$ and $\theta = \cap_{\eta:\kappa\subseteq\eta\in\overline{\Gamma^\cap}}\eta$. Obviously, $\theta\in \overline{\Gamma^\cap}$ and $\theta\subseteq\alpha$.
If $\theta\subset\alpha$, then there is $\gamma\ne \alpha$, $(\alpha,\gamma)\in E_{H}[\overline{\Gamma^\cap}, \supseteq]$ such that $\theta\subseteq\gamma$, which implies $  \gamma \succ \beta$.

Suppose now that $\theta=\alpha$. Then there is $\gamma\in\overline{\Gamma^\cap}$ such that $\alpha\succ\gamma\succeq \beta$ and there is no $\gamma'\in\overline{\Gamma^\cap}$ such that $\alpha\succ\gamma'\succ\gamma$ (i.e. $(\alpha, \gamma)$ is in the Hasse diagram for $\succ$). Since any $\eta\in\overline{\Gamma^\cap}$ such that $\eta\supseteq\kappa$ also contains $\alpha$, then there is no $\gamma''\in\overline{\Gamma^\cap}:\alpha\supset\gamma''\succ\gamma $. Indeed, if such $\gamma''$ would exist, then $\gamma''\succeq \beta\Rightarrow \gamma''\supseteq \kappa$, and therefore, after setting $\eta = \gamma''$, we obtain $\alpha\subseteq \gamma''$. The latter makes a contradiction. Therefore, $(\alpha,\gamma)\in E_{G}[\overline{\Gamma^\cap}]$.
\end{proof}

\begin{proof}[Proof of Theorem~\ref{basicbound}]
It is easy to see that the transitive closure of $E_{G}[\overline{\Gamma^\cap}]\cup E_{H}[\overline{\Gamma^\cap}, \supseteq]$ equals $\geq$. Indeed, if $\alpha\geq \beta$ then we have two cases: 1) $\alpha \supseteq \beta$, 2) $\alpha \succ \beta$. If $\alpha \supseteq \beta$, then $\beta$ is reachable from $\alpha$ using edges of $E_{H}[\overline{\Gamma^\cap}, \supseteq]$. If we have $\alpha \succ \beta$, then from Lemma~\ref{transitive} it is clear that we can reach first $\gamma\in\overline{\Gamma^\cap}, \gamma\ne \alpha$ through the edge $(\alpha,\gamma)\in E_{G}[\overline{\Gamma^\cap}]\cup E_{H}[\overline{\Gamma^\cap}, \supseteq]$ and then reach $\beta$.

The Hasse diagram is a smallest digraph whose transitive closure equals $\geq$, therefore
\begin{equation*}
\begin{split}
| E_{H}[\overline{\Gamma^\cap}] | \leq | E_{G}[\overline{\Gamma^\cap}]\cup E_{H}[\overline{\Gamma^\cap}, \supseteq] | \leq \sum_{\alpha\in \overline{\Gamma^\cap}}(\vectornorm{\alpha}^2-\vectornorm{\alpha}) \cdot |D|/2+
\sum_{\alpha\in \overline{\Gamma^\cap}}\vectornorm{\alpha} \cdot |D|
= \\
\frac{1}{2}\sum_{\alpha\in \overline{\Gamma^\cap}}(\vectornorm{\alpha} ^2+\vectornorm{\alpha} )\cdot |D|.  
\end{split}
\end{equation*}
\end{proof}

\ifCOM
Now the main problem is the following: how to split each $\varrho\in \Gamma$ on $s(\varrho)$ parts, i.e. $\varrho = \cup_{i=1}^{s(\varrho)}\varrho^{i}$ (such that each $\varrho^{i}$ is a cartesian product), such that $\left|\left\{\varrho^i|\varrho\in\Gamma,i=1,\cdots,s(\varrho)\right\}^*\right|$ is minimized.
I believe that the program maximum is algorithm with complexity $O\left(n f(\sum_{\varrho\in \Gamma}\vectornorm{\varrho})\right)$, where $f$
is subexponential.
\else
\fi

\section{Examples of moderate size of $\overline{\Gamma^{\cap}}$}\label{Examples}

\begin{example}\label{simplest} Suppose $\sim$ is an equivalence relation on $D$, $D/\sim$ is a set of equivalence classes with respect to $\sim$, and $\Pi$ is a prefix-closed set of strings over $D/\hspace{-2pt}\sim$, i.e. for any $\alpha\in \Pi$, $\alpha^-\in \Pi$. Let us denote
\begin{equation*}
\Gamma = \{ \Omega_1\times \cdots \times \Omega_l \mid \Omega_1 \cdots  \Omega_l\in \Pi\}
\end{equation*}
We have $\overline{\Gamma^{\cap}} = \Gamma$. Note that, in general, $\sum_{\varrho\in\Gamma}|\varrho|$ can depend exponentially on $|D|$.

There are many problems to which this example can be potentially applied such as, for example, language modeling or protein dihedral angles prediction~\cite{zhalgas}. Let us briefly outline such an application. Suppose that our goal is to build an $n$-gram model for a certain natural language $\mathcal{L}$. Then, $D$ is a set of possible words in $\mathcal{L}$ and can be as large as $10^4-10^6$. Since, the number of possible 4-grams, 5-grams etc. grows very fast, a standard $n$-gram model, for $n>3$, will be prone to overfitting or even impossible to build. Then, one can define a mapping $r: D\to D'$, where $|D'|\ll |D|$. E.g. this can be a function that maps a word into its root\footnote{In some languages, such as agglutinative languages, the number of roots is substantially smaller than the number of possible words (due to a composite structure of words).}, or a function that maps a word $w$ to its cluster in some word representation space.  Let us define an equivalence relation by $w_1\sim w_2\Leftrightarrow r(w_1)=r(w_2)$. Note that for surjective $r$, we have $(D/\sim)\equiv D'$, and we can identify corresponding elements of $(D/\sim)$ and $D'$. Then, any prefix-closed set of string $\Pi$ over $D/\sim$ is equivalent to a set sequences over $D'$. 
By construction, $\Gamma^\ast$ is prefix closed, intersection closed, singleton suffix closed, and we have $|\Gamma^\ast| \leq (|D|+1)|\Gamma|=(|D|+1)|\Pi|$, $\Gamma^\ast\supseteq \{\{a\}\mid a\in D\}$. Since $(\Gamma\cup \{\{a\}\mid a\in D\})^\ast = \Gamma^\ast$, the Algorithm~\ref{alg:DP2} allows us to compute a partition function and MAP for the following probabilistic model
\begin{equation*}
P(x_1,\cdots,x_n)\propto \exp\big(\sum_{i=1}^n\sum_{a\in D} w_{ia}[x_i=a]+\sum_{\varrho\in \Gamma}\sum_{i,j\in [n]: j-i+1=\|\varrho\|} c^\varrho_{ij}[x_{i:j}\in \varrho]\big)
\end{equation*}
or, an equivalent model
\begin{equation*}
P(x_1,\cdots,x_n)\propto \exp\big(\sum_{i=1}^n\sum_{a\in D} w_{ia}[x_i=a]+\sum_{\alpha\in \Pi}\sum_{i,j\in [n]: j-i+1=|\alpha|} c^\alpha_{ij}[r(x_{i:j})=\alpha]\big)
\end{equation*}
in time ${\mathcal O} (n (|D|+1)^2 |\Pi|^2)$ (here $r(x_{i:j}) = r(x_i) \cdots r(x_j)$).
This model is a pattern-based model over roots that is enhanced by inclusion of unary terms depending on an assignment of each $x_i$ to a specific word.
\end{example}

\begin{example} Let $\mathcal{F}\subseteq 2^D$ be an intersection-closed family of subsets of $D$ and $r_1, r_2\in {\mathbb N}$. Let us denote
\begin{equation*}
\begin{split}
\Gamma = \{ \Omega_1\times \cdots \times \Omega_l\times \{a_1\}\times \cdots \times \{a_k\} \mid \Omega_i \in \mathcal{F}, i\in [l], a_j\in D,  j\in [k], \\ l\leq r_1, k\leq r_2\}
\end{split}
\end{equation*}
It is easy to see that $\overline{\Gamma^{\cap}} = \Gamma$ (and, even $\Gamma^\ast = \Gamma$). Note that $|\Gamma| = {\mathcal O}(|\mathcal{F}|^{r_1}|D|^{r_2})$, which is favourable to $\sum_{\varrho\in\Gamma}|\varrho|> |D|^{r_1+r_2}$ if $|\mathcal{F}| < |D|$. 

If an order of letters in a sequence is reversed, then this example corresponds to a constraint language $\Gamma'$ with relations of type $\{a_1\}\times \cdots \times \{a_k\}\times \Omega_1\times \cdots \times \Omega_l$ where $\Omega_i \in \mathcal{F}, i\in [l], a_j\in D,  j\in [k], l\leq r_1, k\leq r_2$. The language $\Gamma'$ can be usefull in a statistical modeling of a stochastic process $\{X_t\}_{t\in {\mathbb Z}}, X_t\in [a,b]$, where two discretizations of $[a,b]$ are used. A finer one is used for prediction of $X_t$ in the near future, e.g. with a step size $h$ and, therefore, we set $D=\{ih\mid i\in {\mathbb Z}, i h\in [a,b]\}$, and a coarse one is used for prediction of $X_t$ in a longer future. E.g., if the step size of the coarse discretization equals some multiple of $h$, i.e. $H=Mh, M\in {\mathbb N}$, then one can define $\mathcal{F} = \{\{Hi, Hi+h, \cdots, Hi+h(M-1)\}\cap [a,b] \mid i\in {\mathbb Z}\}$, i.e. $\mathcal{F}$ is defined as a set of equivalence classes that correspond to coarse values of the process. 

Since $\Gamma^\ast = \Gamma$, the Algorithm~\ref{alg:DP2} computes (if applied in a reversed order of variables) a partition function and MAP for the following probabilistic model
\begin{equation*}
P(x_1,\cdots,x_N)\propto \exp\big(\sum_{\varrho\in \Gamma'}\sum_{i,j\in [N]: j-i+1=\|\varrho\|} c^\varrho_{ij}[x_{i:j}\in \varrho]\big).
\end{equation*}
in time  ${\mathcal O} (n |\Gamma|^2)$.
In this generalized pattern-based model, the constraint $(X_{t_0},\cdots, X_{t_0+k+l-1})\in \{a_1\}\times \cdots \times \{a_k\}\times \Omega_1\times \cdots \times \Omega_l$ is equivalent to fixing a trajectory of $X_t$ in the beginning and giving a possible interval of variation starting from $t=t_0+k$ (see Figure~\ref{3trends}). 
\end{example}

\begin{figure}

  \begin{center}
\includegraphics[width=6cm]{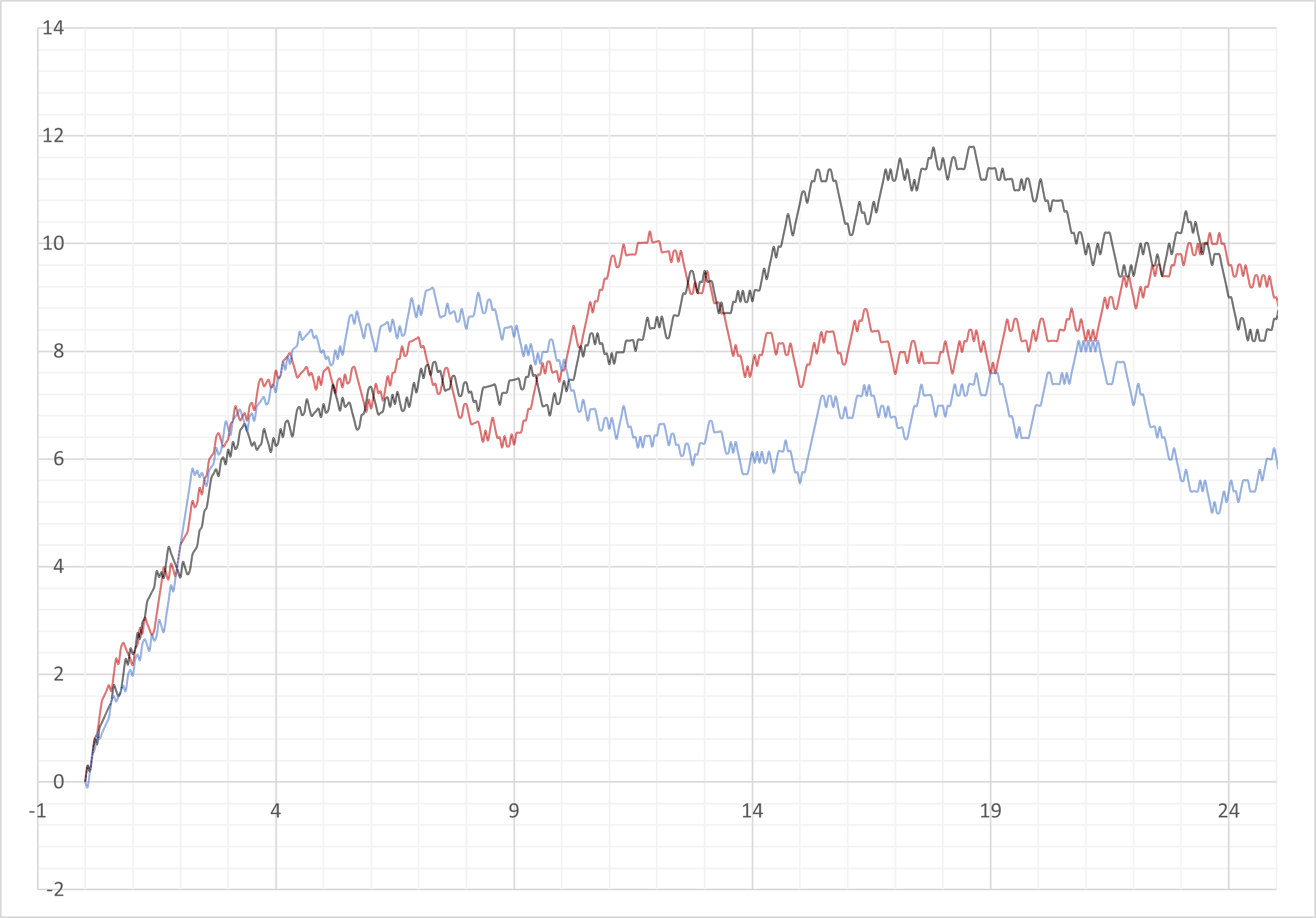}
  \end{center}
\caption{ This figure shows a typical growth behaviour on financial markets. If we choose a discretization of  both axes (time and value) with a unit step, then for first 11 moments of time we have $(X_0,X_1,\cdots,X_{10})\in \{0\}\times \{2\}\times \{4\}\times \{6\}\times \Omega^{7}$ where $\Omega = \{6,7,8,9\}$ (for all three graphs). In other words, $X_t$ is precisely known in the beginning of the process and, further, one can predict it to be in a certain interval of values.}\label{3trends}
\end{figure}

Tightly connected with previous examples is the following one.
\begin{example} Let ${\mathcal L}$ be a set of equivalence relations $\{\gamma^1, \cdots, \gamma^N\}$ over the same set $D$ such that $\gamma^1\supset \cdots \supset \gamma^N$. A set of equivalence classes of the relation $\gamma^i$ is denoted by $\mathcal{F}_i$, for $i\in [N]$. Let $\Pi$ be a prefix-closed set of strings over $[N]$ such that for any $u=u_1u_2\cdots u_n \in \Pi$ and $v=v_1v_2\cdots v_m \in \Pi$, natural numbers $n\geq m$, we have $$u_{1:n-m}(\max\{u_{n-m+1}, v_1\} \max\{u_{n-m+2}, v_2\}\cdots \max\{u_{n}, v_m\}) \in \Pi.$$ Then,
\begin{equation*}
\Gamma = \{ \Omega_1\times \cdots \times \Omega_l| \Omega_i \in \mathcal{F}_{n_i}, i\in [l], w = {n_1} {n_2} \cdots  {n_l}\in \Pi\}
\end{equation*}
satisfies $\overline{\Gamma^{\cap}} = \Gamma$. If $N=1$ and $\Pi=\{\lambda, 1, 11, 111, \cdots\}$ we obtain the example~\ref{simplest}.

For example, if $\Pi$ is the set of all nondecreasing sequences of $[N]$ with length $\leq r$ we obtain the following special case: 
\begin{equation*}
\Gamma = \{ \Omega_1\times \cdots \times \Omega_l| \Omega_i \in \mathcal{F}_{n_i}, i\in [l], {n_1}\leq {n_2} \leq \cdots \leq {n_l}, l\leq r\}
\end{equation*}
It is easy to see that $\overline{\Gamma^{\cap}} = \Gamma$. Then, $|\Gamma| = {\mathcal O}(|\mathcal{F}_N|^k)$, which is favourable to $\sum_{\varrho\in\Gamma}|\varrho|> |D|^{k}$ if  $|\mathcal{F}_N| < |D|$.

As in the previous example, the latter language can be potentially applied to a statistical modeling of a stochastic process $\{X_t\}_{t\in {\mathbb Z}}, X_t\in [a,b]$. Suppose we have $N$ discretizations of $[a,b]$ with step sizes $h_1, \cdots, h_N$, such that $\frac{h_{i+1}}{h_{i}}=M_i\in {\mathbb N}$. 
Let us denote ${\rm round}_i(x) = \max\{th_{i}\mid t\in {\mathbb Z}, th_{i}\leq x\}$. Obviously, ${\rm round}_i(x)$ is a value of $x$ rounded w.r.t. $i$-th discretization.  
Then, we define $D = \{ih_1 \mid i\in {\mathbb Z}, i h_1\in [a,b]\}$ and $(ih_1, j h_1)\in \gamma^{N+1-s}\Leftrightarrow {\rm round}_s(ih_1)={\rm round}_s(jh_1), s\in [N]$. By construction, $\gamma^1\supset \cdots \supset \gamma^N$, and we define $\mathcal{F}_i = D/\gamma_i$.

After a sequence reversion in each relation of $\Gamma$, we obtain
\begin{equation*}
\Gamma' = \{ \Omega_1\times \cdots \times \Omega_l| \Omega_i \in \mathcal{F}_{n_i}, i\in [l], {n_1}\geq {n_2} \geq \cdots \geq {n_l}, l\leq r\}
\end{equation*}
In other words, $(X_{t_0}, \cdots, X_{t_0+l-1})\in \Omega_1\times \cdots \times \Omega_l\in \Gamma'$ is equivalent to stating that $X_{t_0+i-1}\in  \Omega_{i}, i\in [l]$ and the latter means that ${\rm round}_{n_{i}}(X_{t_0+i-1})$ is fixed to some specific value. Note that as the index $i\in [l]$ increases, we  round the value   $X_{t_0+i-1}$ more coarsly. Thus, the Algorithm~\ref{alg:DP2} computes a partition function and MAP for the following probabilistic model
\begin{equation*}
\begin{split}
P(x_1,\cdots,x_N)\propto \exp\big(\sum_{l=1}^r \sum_{N\geq n_1\geq \cdots\geq n_l\geq 1}\sum_{a_1\in F_{n_1}, \cdots,a_l\in F_{n_l}}
\sum_{i,j\in [n]: j-i+1=l} c^{n_1,a_1,\cdots, n_l,a_l}_{ij}[x_{i:j}\in a_1\times \cdots  \times a_l]\big).
\end{split}
\end{equation*}
in time ${\mathcal O} (n (|D|+1)^2 |\Pi|^2)$. In this generalized pattern-based model, the constraint $(X_{t_0},\cdots, X_{t_0+l-1})\in a_1 \times \cdots \times a_l$ is equivalent giving larger intervals of variation for $X_{t+i-1}$ as $i\in [l]$ increases. Again, as Figure~\ref{3trends} demonstrates, this type of ``local correlations'' in time-series modeling is natural, because any ``cause'' that precedes a certain event, has a predictable effect in the beginning, but leads to different divergent trajectories when approaching the end of pattern. In other words, interval of ``uncertainty'' increases with $i\in [l]$.
\end{example}

\begin{example} Suppose $\sim$ is an equivalence relation on $D^2$, $\mathcal{J} = D^2/\sim$ is a set of equivalence classes with respect to $\sim$, and $\Pi$ is any prefix-closed set of strings over $\mathcal{J}$. Thus, for any $\alpha\in \Pi$, $\alpha^-\in \Pi$. Let us denote
\begin{equation*}
\Gamma = \{ \varrho_{w, W} | w\in \Pi, |w|\ne 0, W\in 2^D\setminus \{\emptyset\}\} \cup (2^D\setminus \{\emptyset\})
\end{equation*}
where for $w=w_1\cdots w_{n-1}$, $w_i\in \mathcal{J}$ and $W\in 2^D\setminus \{\emptyset\}$ we denote:
\begin{equation*}
\varrho_{w_1\cdots w_{n-1}, W} = \{ (x_1, \cdots, x_n)\in D^n | (x_i,x_{i+1})\in w_i, i\in [n-1], x_n\in W\}.
\end{equation*}
It is easy to see that $\overline{\Gamma^{\cap}} = \Gamma$ and $|\Gamma| = |\Pi|(2^{|D|}-1)$. 

Using this language one can model patterns of ``growth'' and ``decline'' for a stochastic process $\{X_t\}_{t\in {\mathbb Z}}, X_t\in [d]$ with a state space $D = [d]$ in the following way (see Figure~\ref{waves}). For example, let $\mathcal{J} = \{\nearrow, \searrow, \rightarrow\}$, where $\nearrow = \{(a,b)\in D^2| a<b\}$, $\searrow = \{(a,b)\in D^2| a>b\}$ and $\rightarrow = \{(a,b)\in D^2| a=b\}$. It is easy to see that $D^2 = \nearrow\cup \searrow\cup \rightarrow$ and the union is disjoint.
Let $\Pi$ be a prefix-closed set of strings over $\mathcal{J}$. For example, $\Pi = \{\nearrow, \nearrow\nearrow,  \searrow, \searrow\searrow, \searrow\rightarrow\nearrow, \searrow\rightarrow\}$. We interpret each relation in $\Gamma$, e.g. $\varrho_{\scaleto{\searrow\rightarrow\nearrow}{3pt},D}$, as a ``growth-decline'' pattern of a process with a state-space $D$. 

The algorithm~\ref{alg:DP'} can be applied to the minimization of the following objective
\begin{equation*}
E(x_1, \cdots, x_N) = \sum_{i=1}^N w_{ia}[x_i=a]+\hspace{-20pt}\sum_{w\in \Pi, W\in 2^D\setminus \{\emptyset\}}\sum_{i,j\in [N]: j-i+1=|w|}c^{w,W}_{ij}[x_{i:j}\in \varrho_{w, W}],
\end{equation*}
where all weights $w_{ia}, c^{w,W}_{ij}$ are nonpositive. It has a complexity $\mathcal{O}(n|\Gamma|^2)$. Since this language is not simple, the algorithm~\ref{alg:DP2} has the complexity ${\mathcal O} (n |\Gamma^\ast|^2)$ and the cardinality of $\Gamma^\ast$ can be potentially very large.
\end{example}

\begin{figure}

  \begin{center}
\includegraphics[width=6cm]{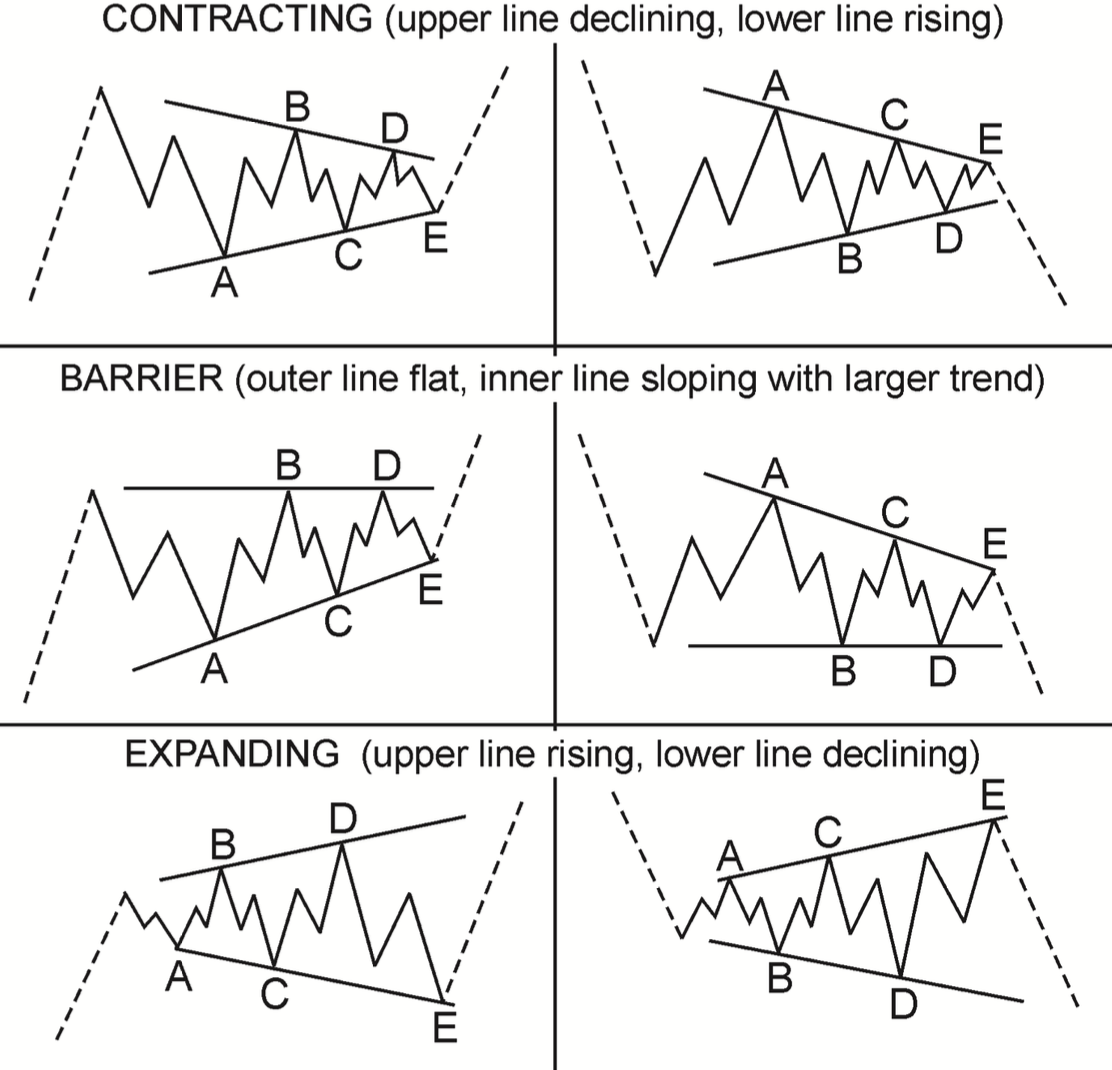}
  \end{center}
\caption{ A vocabulary of "growth-decline" patterns in financial markets, known as Elliot waves.}\label{waves}
\end{figure}

\section{Conclusions}
Probabilistic models based on a generalized pattern-based energy naturally appear in such fields as natural language processing, stochastic processes, conditional random fields etc. Key tasks that need to be solved in these applications include the minimization of an energy (MAP) and the computation of  a partition function. Both tasks can be formulated uniformly as the computation of a partition function over a semiring. Also,  the energy minimization task can be viewed as a Valued CSP whose dual constraint graph has a bounded treewidth and a language of constraints is fixed. 
 
A partition function and MAP of a generalized pattern-based energy can be computed exactly if a language of constraints $\Gamma$ is simple and has a moderate cardinality of the closure $\overline{\Gamma^\cap}$. The cardinality of $\overline{\Gamma^\cap}$ can be both moderate and very large for some natural languages. In this sense, the parameter $|\overline{\Gamma^\cap}|$ characterizes the time complexity of MAP/a partition function computation. From examples that we list in Section~\ref{Examples} it is clear that $|\overline{\Gamma^\cap}|$ can be substantially (exponentially) smaller than $|D|^{l_{\max}}$ and this makes our algorithms for the energy minimization faster than state-of-the-art algorithms for Valued CSPs with bounded treewidth.

\section*{Acknowledgments}
We would like to thank Anuar Sharafudinov for his help with writing a Java code that computes the closure of a simple language. 

\bibliographystyle{unsrt}  
\bibliography{lit}

\begin{thebibliography}{10}

\bibitem{Khanna}
S.~{Khanna}, M.~{Sudan}, and L.~{Trevisan}.
\newblock Constraint satisfaction: the approximability of minimization
  problems.
\newblock In {\em Proceedings of Computational Complexity. Twelfth Annual IEEE
  Conference}, pages 282--296, 1997.

\bibitem{COOPER2003311}
Martin~C. Cooper.
\newblock Reduction operations in fuzzy or valued constraint satisfaction.
\newblock {\em Fuzzy Sets and Systems}, 134(3):311 -- 342, 2003.

\bibitem{Schiex}
Martin Cooper, Simon de~Givry, Marti Sanchez, Thomas Schiex, and Matthias
  Zytnicki.
\newblock {Virtual arc consistency for weighted CSP}.
\newblock In {\em {Twenty-third AAAI Conference on Artificial Intelligence}},
  Proceedings of the Twenty-Second AAAI Conference on Artificial Intelligence,
  page~6, Chicago, United States, July 2008. {AAAI - Association for the
  Advancement of Artificial Intelligence}.

\bibitem{HELL199092}
Pavol Hell and Jaroslav Nešetřil.
\newblock On the complexity of h-coloring.
\newblock {\em Journal of Combinatorial Theory, Series B}, 48(1):92 -- 110,
  1990.

\bibitem{JEAVONS1998185}
Peter Jeavons.
\newblock On the algebraic structure of combinatorial problems.
\newblock {\em Theoretical Computer Science}, 200(1):185 -- 204, 1998.

\bibitem{BJK}
Andrei Bulatov, Peter Jeavons, and Andrei Krokhin.
\newblock Classifying the complexity of constraints using finite algebras.
\newblock {\em SIAM J. Comput.}, 34(3):720–742, March 2005.

\bibitem{BulatovFOCS}
A.~A. {Bulatov}.
\newblock A dichotomy theorem for nonuniform csps.
\newblock In {\em 2017 IEEE 58th Annual Symposium on Foundations of Computer
  Science (FOCS)}, pages 319--330, 2017.

\bibitem{Feder}
Tom\'{a}s Feder and Moshe~Y. Vardi.
\newblock Monotone monadic snp and constraint satisfaction.
\newblock In {\em Proceedings of the Twenty-Fifth Annual ACM Symposium on
  Theory of Computing}, STOC '93, page 612–622, New York, NY, USA, 1993.
  Association for Computing Machinery.

\bibitem{ZhukDmitriy}
Dmitriy Zhuk.
\newblock A proof of the csp dichotomy conjecture.
\newblock {\em J. ACM}, 67(5), August 2020.

\bibitem{DechterPearl}
Rina Dechter and Judea Pearl.
\newblock Tree clustering for constraint networks (research note).
\newblock {\em Artif. Intell.}, 38(3):353–366, April 1989.

\bibitem{GOTTLOB2002579}
Georg Gottlob, Nicola Leone, and Francesco Scarcello.
\newblock Hypertree decompositions and tractable queries.
\newblock {\em Journal of Computer and System Sciences}, 64(3):579 -- 627,
  2002.

\bibitem{GroheMarx}
Martin Grohe and D\'{a}niel Marx.
\newblock Constraint solving via fractional edge covers.
\newblock {\em ACM Trans. Algorithms}, 11(1), August 2014.

\bibitem{NewHybrid}
Martin~C. Cooper and Stanislav {\v{Z}}ivn{\'y}.
\newblock A new hybrid tractable class of soft constraint problems.
\newblock In David Cohen, editor, {\em Principles and Practice of Constraint
  Programming -- CP 2010}, pages 152--166, Berlin, Heidelberg, 2010. Springer
  Berlin Heidelberg.

\bibitem{HybridVCSPs}
Rustem Takhanov.
\newblock Hybrid vcsps with crisp and valued conservative templates.
\newblock In Takeshi Tokuyama and Yoshio Okamoto, editors, {\em 28th
  International Symposium on Algorithms and Computation, ISAAC 2017},
  volume~92, Germany, December 2017. Schloss Dagstuhl- Leibniz-Zentrum fur
  Informatik GmbH, Dagstuhl Publishing.
\newblock 28th International Symposium on Algorithms and Computation, ISAAC
  2017 ; Conference date: 09-12-2017 Through 22-12-2017.

\bibitem{takhanov2017searching}
Rustem Takhanov.
\newblock Searching for an algebra on csp solutions, 2017.

\bibitem{Effectiveness}
Vladimir Kolmogorov, Michal Rol{\'i}nek, and Rustem Takhanov.
\newblock Effectiveness of structural restrictions for hybrid csps.
\newblock In Khaled Elbassioni and Kazuhisa Makino, editors, {\em Algorithms
  and Computation - 26th International Symposium, ISAAC 2015, Proceedings},
  Lecture Notes in Computer Science (including subseries Lecture Notes in
  Artificial Intelligence and Lecture Notes in Bioinformatics), pages 566--577,
  Germany, January 2015. Springer Verlag.
\newblock 26th International Symposium on Algorithms and Computation, ISAAC
  2015 ; Conference date: 09-12-2015 Through 11-12-2015.

\bibitem{Ye:NIPS09}
Nan Ye, Wee~Sun Lee, Hai~Leong Chieu, and Dan Wu.
\newblock Conditional random fields with high-order features for sequence
  labeling.
\newblock In {\em NIPS}, 2009.

\bibitem{TakhanovKolm}
Rustem Takhanov and Vladimir Kolmogorov.
\newblock Inference algorithms for pattern-based crfs on sequence data.
\newblock pages 1182--1190, January 2013.
\newblock 30th International Conference on Machine Learning, ICML 2013 ;
  Conference date: 16-06-2013 Through 21-06-2013.

\bibitem{Qian:ICML09}
Xian Qian, Xiaoqian Jiang, Qi~Zhang, Xuanjing Huang, and Lide Wu.
\newblock Sparse higher order conditional random fields for improved sequence
  labeling.
\newblock In {\em ICML}, 2009.

\bibitem{PatternsVersus}
Rustem Takhanov and Zhenisbek Assylbekov.
\newblock Patterns versus characters in subword-aware neural language modeling.
\newblock In Derong Liu, Shengli Xie, Yuanqing Li, Dongbin Zhao, and
  El-Sayed~M. El-Alfy, editors, {\em Neural Information Processing}, pages
  157--166, Cham, 2017. Springer International Publishing.

\bibitem{takhanov2014combining}
Rustem Takhanov and Vladimir Kolmogorov.
\newblock Combining pattern-based crfs and weighted context-free grammars.
\newblock 26(1):257–272, jan 2022.

\bibitem{Impagliazzo}
R.~Impagliazzo and R.~Paturi.
\newblock Complexity of k-sat.
\newblock In {\em Proceedings of Fourteenth Annual IEEE Conference on
  Computational Complexity}, pages 237--240, 1999.

\bibitem{Bistarelli1999}
S.~Bistarelli, U.~Montanari, F.~Rossi, T.~Schiex, G.~Verfaillie, and
  H.~Fargier.
\newblock Semiring-based csps and valued csps: Frameworks, properties, and
  comparison.
\newblock {\em Constraints}, 4(3):199--240, Sep 1999.

\bibitem{Lafferty}
John~D. Lafferty, Andrew McCallum, and Fernando C.~N. Pereira.
\newblock Conditional random fields: Probabilistic models for segmenting and
  labeling sequence data.
\newblock In {\em Proceedings of the Eighteenth International Conference on
  Machine Learning}, ICML '01, page 282–289, San Francisco, CA, USA, 2001.
  Morgan Kaufmann Publishers Inc.

\bibitem{Istrail}
Sorin Istrail.
\newblock Statistical mechanics, three-dimensionality and np-completeness: I.
  universality of intracatability for the partition function of the ising model
  across non-planar surfaces (extended abstract).
\newblock In {\em Proceedings of the Thirty-Second Annual ACM Symposium on
  Theory of Computing}, STOC '00, page 87–96, New York, NY, USA, 2000.
  Association for Computing Machinery.

\bibitem{Sarawagi}
Sunita Sarawagi and William~W Cohen.
\newblock Semi-markov conditional random fields for information extraction.
\newblock In L.~Saul, Y.~Weiss, and L.~Bottou, editors, {\em Advances in Neural
  Information Processing Systems}, volume~17, pages 1185--1192, Cambridge,
  Massachusetts, USA, 2005. MIT Press.

\bibitem{Rother}
C.~{Rother}, P.~{Kohli}, {Wei Feng}, and {Jiaya Jia}.
\newblock Minimizing sparse higher order energy functions of discrete
  variables.
\newblock In {\em 2009 IEEE Conference on Computer Vision and Pattern
  Recognition}, pages 1382--1389, 2009.

\bibitem{QianXian}
Xian Qian, Xiaoqian Jiang, Qi~Zhang, Xuanjing Huang, and Lide Wu.
\newblock Sparse higher order conditional random fields for improved sequence
  labeling.
\newblock In {\em Proceedings of the 26th Annual International Conference on
  Machine Learning}, ICML '09, page 849–856, New York, NY, USA, 2009.
  Association for Computing Machinery.

\bibitem{Komodakis}
N.~{Komodakis} and N.~{Paragios}.
\newblock Beyond pairwise energies: Efficient optimization for higher-order
  mrfs.
\newblock In {\em 2009 IEEE Conference on Computer Vision and Pattern
  Recognition}, pages 2985--2992, 2009.

\bibitem{cuong14a}
Nguyen~Viet Cuong, Nan Ye, Wee~Sun Lee, and Hai~Leong Chieu.
\newblock Conditional random field with high-order dependencies for sequence
  labeling and segmentation.
\newblock {\em Journal of Machine Learning Research}, 15(28):981--1009, 2014.

\bibitem{vieira-etal-2016-speed}
Tim Vieira, Ryan Cotterell, and Jason Eisner.
\newblock Speed-accuracy tradeoffs in tagging with variable-order {CRF}s and
  structured sparsity.
\newblock In {\em Proceedings of the 2016 Conference on Empirical Methods in
  Natural Language Processing}, pages 1973--1978, Austin, Texas, November 2016.
  Association for Computational Linguistics.

\bibitem{lavergne-yvon-2017-learning}
Thomas Lavergne and Fran{\c{c}}ois Yvon.
\newblock Learning the structure of variable-order {CRF}s: a finite-state
  perspective.
\newblock In {\em Proceedings of the 2017 Conference on Empirical Methods in
  Natural Language Processing}, pages 433--439, Copenhagen, Denmark, September
  2017. Association for Computational Linguistics.

\bibitem{martins-etal-2011-structured}
Andr{\'e} Martins, Noah Smith, M{\'a}rio Figueiredo, and Pedro Aguiar.
\newblock Structured sparsity in structured prediction.
\newblock In {\em Proceedings of the 2011 Conference on Empirical Methods in
  Natural Language Processing}, pages 1500--1511, Edinburgh, Scotland, UK.,
  July 2011. Association for Computational Linguistics.

\bibitem{zhalgas}
Zhalgas Mukanov and Rustem Takhanov.
\newblock Learning the pattern-based crf for prediction of a protein local
  structure.
\newblock {\em Informatica}, 46(6):135--141, 2022.

\bibitem{Felzenszwalb}
Pedro Felzenszwalb and John~G Oberlin.
\newblock Multiscale fields of patterns.
\newblock In Z.~Ghahramani, M.~Welling, C.~Cortes, N.~Lawrence, and K.Q.
  Weinberger, editors, {\em Advances in Neural Information Processing Systems},
  volume~27, New York, USA, 2014. Curran Associates, Inc.

\bibitem{Terrioux}
Cyril Terrioux and Philippe Jegou.
\newblock Bounded backtracking for the valued constraint satisfaction problems.
\newblock In Francesca Rossi, editor, {\em Principles and Practice of
  Constraint Programming -- CP 2003}, pages 709--723, Berlin, Heidelberg, 2003.
  Springer Berlin Heidelberg.

\bibitem{Givry}
Simon de~Givry, Thomas Schiex, and Gerard Verfaillie.
\newblock Exploiting tree decomposition and soft local consistency in weighted
  csp.
\newblock AAAI'06, page 22–27, Boston, Massachusetts, 2006. AAAI Press.

\bibitem{Ndiaye}
Samba~Ndojh Ndiaye, Philippe Jégou, and Cyril Terrioux.
\newblock Extending to soft and preference constraints a framework for solving
  efficiently structured problems.
\newblock In {\em 2008 20th IEEE International Conference on Tools with
  Artificial Intelligence}, volume~1, pages 299--306, 2008.

\bibitem{Scarcello}
Georg Gottlob, Gianluigi Greco, and Francesco Scarcello.
\newblock Tractable optimization problems through hypergraph-based structural
  restrictions.
\newblock In Susanne Albers, Alberto Marchetti-Spaccamela, Yossi Matias,
  Sotiris Nikoletseas, and Wolfgang Thomas, editors, {\em Automata, Languages
  and Programming}, pages 16--30, Berlin, Heidelberg, 2009. Springer Berlin
  Heidelberg.

\bibitem{Farnqvist}
Tommy F{\"a}rnqvist.
\newblock {\em Exploiting structure in {CSP-related} problems}.
\newblock PhD thesis, 2013.

\end{thebibliography}

\end{document}